\definecolor{Gray}{gray}{0.9}
\title{\Large Provable Guarantees for Understanding Out-of-distribution Detection}
\author{%
  Peyman Morteza\\
  Department of Computer Sciences\\
  University of Wisconsin-Madison\\
  \texttt{peyman@cs.wisc.edu} \\
  \And
  Yixuan Li \\
  Department of Computer Sciences\\
  University of Wisconsin-Madison\\
  \texttt{sharonli@cs.wisc.edu} \\
  }
\date{}
\def\*#1{\mathbf{#1}}
\newtheorem{thm}{Theorem}
\newtheorem{lem}{Lemma}
\newtheorem{deff}{Definition}
\newtheorem{rmk}{Remark}
\newtheorem{prop}[thm]{Proposition}
\newtheorem{cor}{Corollary}
\numberwithin{figure}{section}
\definecolor{Gray}{gray}{0.9}
\definecolor{LightCyan}{rgb}{0.88,1,1}
\newcommand{\R}{\mathbb{R}}
\newcommand{\Norm}[1]{\left\Vert #1 \right\Vert}
\newcommand{\abs}[1]{\lvert#1 \rvert} 
\newcommand{\inn}[1]{\langle #1\rangle}
\newcommand{\B}{\mathcal{B}}
\DeclareMathOperator*{\innn}{in}
\DeclareMathOperator*{\out}{out}
\DeclareMathOperator{\featt}{feature}
\def\*#1{\mathbf{#1}}
\begin{document}
\setcounter{tocdepth}{1}
\maketitle

\begin{abstract}
Out-of-distribution (OOD) detection is important for deploying machine learning models in the real world, where test data from shifted distributions can naturally arise. While a plethora of algorithmic approaches have recently emerged for OOD detection, a critical gap remains in theoretical understanding.
In this work, we develop an analytical framework that characterizes and unifies the theoretical understanding for OOD detection. Our analytical framework motivates a novel OOD detection method for neural networks, \emph{GEM}, which demonstrates both theoretical and empirical superiority. In particular, on CIFAR-100 as in-distribution data, our method outperforms a competitive baseline by {16.57}\% (FPR95). Lastly, we formally provide provable guarantees and comprehensive analysis of our method, underpinning how various properties of data distribution affect the performance of OOD detection\footnote{Code is available at:  \url{https://github.com/PeymanMorteza/GEM}}.

\end{abstract}

\section{Introduction}
\label{sec:intro}

When deploying machine learning models in the open world, it becomes increasingly critical to ensure the reliability---models are not only  accurate on their familiar data distribution, but also aware of unknown inputs outside the training data distribution. Out-of-distribution (OOD) samples can naturally arise from an irrelevant distribution whose label set has no intersection with training categories, and {therefore should not be predicted by the model}. This gives rise to the importance of OOD detection, which determines whether an input is in-distribution (ID) or OOD. 
 
 The main challenge in OOD detection stems from the fact that modern deep neural networks can easily produce overconfident predictions on OOD inputs~\citep{nguyen2015deep}. This phenomenon makes the separation between ID and OOD data a non-trivial task. OOD detection approaches commonly rely on an OOD scoring function that derives statistics from the pre-trained neural networks and performs OOD detection by exercising a threshold comparison. For example, ~\citep{hendrycks2016baseline} use the maximum softmax probability (MSP) and classifies inputs with smaller MSP scores as OOD data. While improved OOD scoring functions~\citep{liang2018enhancing, lee2018simple, liu2020energy, sun2021react} have emerged recently, {their inherent connections and theoretical understandings are largely lacking}. To the best of our knowledge, there is limited prior work providing provable guarantees for OOD detection methods from a rigorous mathematical point of view.

This paper takes an important step to bridge the gap by providing a unified framework that allows the research community to understand the theoretical connections among recent model-based OOD detection methods. Our framework further enables devising new methodology, theoretical and empirical insights on OOD detection. Our \textbf{key contributions} are three folds:
\begin{itemize}
    \item First, we provide an analytical framework that precisely characterizes and unifies the theoretical interpretation of several representative OOD scoring functions (Section~\ref{GDA}). We derive analytically an optimal form of OOD scoring function called \emph{GEM (Gaussian mixture based Energy Measurement)}, which is provably aligned with the true log-likelihood for capturing OOD uncertainty. In contrast, we show mathematically that prior scoring functions can be sub-optimal. 
    
    \item Second, our analytical framework motivates a new OOD detection method for deep neural networks (Section~\ref{sec:neural-networks}). By modeling the feature space as a class-conditional multivariate Gaussian distribution, we propose a \emph{GEM} score based on the Gaussian generative model.  Empirical evaluations demonstrate the competitive performance of the new scoring function. In particular, on CIFAR-100 as in-distribution data, \emph{GEM} outperforms \citep{liu2020energy} by {16.57}\% (FPR95). Our method is theoretically more rigorous than maximum Mahalanobis distance~\citep{lee2018simple} while achieving equally strong performance.  
    \item Lastly, our work provides both provable guarantees and empirical analysis to understand how various properties of data representation in feature and input space affect the performance of OOD detection (Section~\ref{sec:bound}). Previous OOD detection methods can be difficult to analyze due to the stochasticity in neural network optimization. Our framework offers key simplifications that allow us to (1)  isolate the effect of data representation from model optimization, and (2) flexibly modulate properties of data representation in feature and input space. Through both synthetic simulations and theoretical analysis, our study reveals important insights on how OOD detection performance changes with respect to data distributions. 
\end{itemize}
We end the introduction with an outline of this work. In Section \ref{GDA}, we first define the problem of study and set the notations that we need. Next, we analyze previous OOD detection methods under the Gaussian mixture assumption and introduce the GEM score. In Section \ref{sec:neural-networks}, we extend GEM to deep neural networks and perform experiments on common benchmarks. In Section \ref{sec:bound}, we provide rigorous guarantees for the performance of GEM, along with simulation verifications. We conclude our work in Section~\ref{sec:conclusion}, following an expansive literature review in Section~\ref{sec:related}.
\begin{figure*}[t]
\begin{center}
\includegraphics[scale=0.27]{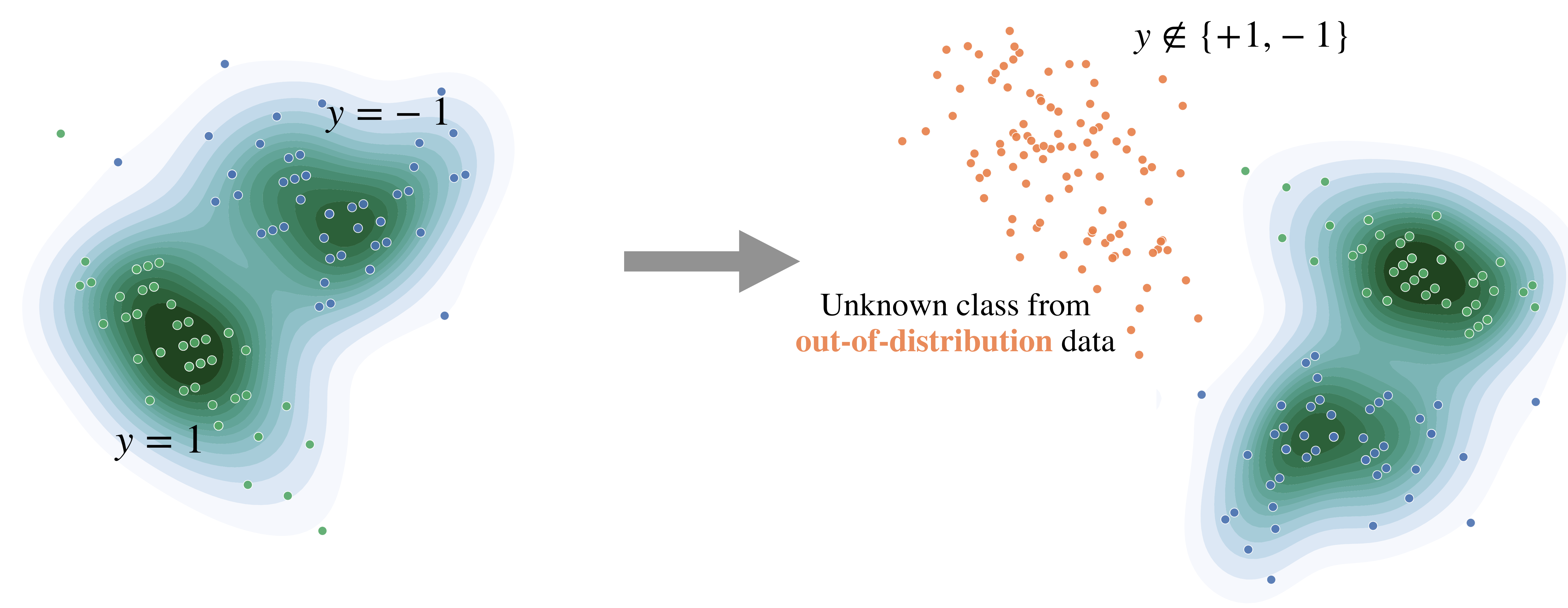}
\caption{\small \textbf{Left}: The in-distribution data $P^{\text{in}}_\mathcal{X}$ comprises of two classes $\mathcal{Y}=\{-1,+1\}$, indicated by green and blue dots respectively. 
\textbf{Right}: 
Out-of-distribution detection allows the
learner to express ignorance outside the support set of current known classes, and prevents the model from misclassifying OOD data (orange dots) into known classes (blue and green dots).}

\label{fig:splash}
\end{center}
\end{figure*}

\section{OOD Detection Under Gaussian Mixtures}
\label{GDA}
In this section, we mathematically describe representative OOD scoring functions under the Gaussian mixture data model. This allows us to contrast with the ideal OOD detector where the data density is explicit. We later apply the insight gained from this simple model to introduce a new score OOD detection for deep neural networks. 
\subsection{Preliminaries}
\label{sec:OOD}
We denote by $\mathcal{X}=\mathbb{R}^d$  the input space and $\mathcal{Y}=\{y_{1},..., y_{k}\}$  the label space. Let $P^{\innn}_{\mathcal{X},\mathcal{Y}}$ denote a probability distribution defined on $\mathcal{X}\times \mathcal{Y}$. Furthermore, let $P^{\innn}_{\mathcal{X}}$ and $P^{\innn}_{\mathcal{Y}}$ denote the marginal probability distribution on $\mathcal{X}$ and $\mathcal{Y}$ respectively. A classifier $f:\mathcal{X}\to \mathbb{R}^k$ learns to map a given  input $\*x \in \mathcal{X}$ to the output space . 

\paragraph{Problem Statement}
%%%%%%%%%%%%%%%%%%%%%%%%%%%%%%%%%%%%
Given a classifier $f$ learned on training samples from in-distribution $P^{\innn}_{\mathcal{X},\mathcal{Y}}$, 
the goal is to design a binary function estimator, 
\begin{align*}
g: \mathcal{X} \to \{\innn,\out\},
\end{align*}
that classifies whether a test-time sample $\*x\in \mathcal{X}$ is generated from $P^{\innn}_{\mathcal{X}}$ or not.  Estimating OOD uncertainty is challenging due to the lack of knowledge on OOD data coming from  $P_{\mathcal{X}}^{\out}$. It is infeasible to explicitly train a binary classifier $g$.   A natural approach is to use  level set for OOD detection, based on the data density  $P^{\innn}_{\mathcal{X}}$. We define the {\it ideal classifier} for OOD detection as follows,
\[ 
	g^{\text{ideal}}_{\lambda}(\*x)=\begin{cases} 
      \text{in} & p^{\innn}_{\mathcal{X}}(\*x)\ge \lambda \\
      \text{out} & p^{\innn}_{\mathcal{X}}(\*x) < \lambda 
   \end{cases},
\]
where $p_{\mathcal{X}}^{\innn}$ is the density function of $P_{\mathcal{X}}^{\innn}$ and $\lambda$ is the threshold, which is chosen so that a high fraction (\emph{e.g.,} 95\%) of in-distribution data is correctly classified. 
For evaluation purpose, we define the error rate by,
\begin{align*}
	&\text{TPR}(g):=\mathbb{E}_{\*x \sim P_{\mathcal{X}}^{\innn}}(\mathbb{I}_{\{g(\*x)=\innn \}}),\\
	&\text{FPR}(g):=\mathbb{E}_{\*x \sim P_{\mathcal{X}}^{\out}}(\mathbb{I}_{\{g(\*x)=\innn\}}).
	\end{align*}
By convention, we assume in-distribution samples have positive labels. In practice, $P_{\mathcal{X}}^{\out}$ is often defined by a distribution that simulates unknowns encountered during deployment time, such as samples from an irrelevant distribution whose label set has no intersection with $\mathcal{Y}$ and {therefore should not be predicted by the model}.

%%%%%%%%%%%%%%%%%%%%%%%%%%%%%%%%%%%%%
\paragraph{In-distribution Data Model} We assume in-distribution data is drawn from a Gaussian mixture with equal priors and a tied covariance matrix $\Sigma$. The simplicity is desirable for us to precisely characterize various OOD detection methods and their optimality. We will further extend our analysis to neural networks in Section~\ref{sec:neural-networks}. Specifically,
\begin{align*}
	&\*x |  y_{i} \sim \mathcal{N}(\boldsymbol{\mu}_{i},\Sigma),\\
	&p^{\innn}_{\mathcal{Y}}(y_{i})=\frac{1}{k},
\end{align*}
where $\boldsymbol{\mu}_{i}\in \R^{d}$ is the mean of class $y_i \in \mathcal{Y}$ and $\Sigma \in \mathbb{R}^{d\times d}$ is the covariance matrix. The class-conditional density follows a Gaussian distribution,
\begin{align*}
p^{\innn}_{\mathcal{X}|\mathcal{Y}}(\*x|y_{i})=\frac{1}{\sqrt{(2\pi)^{d}\abs{\Sigma}}}\exp(-\frac{1}{2}(\*x-\boldsymbol{\mu}_{i})^{\top}\Sigma^{-1}(\*x-\boldsymbol{\mu}_{i})).
\end{align*}
Above implies the density function of $P_{\mathcal{X}}^{\innn}$ can be written as follows,
\begin{align*}
p_{\mathcal{X}}^{\innn}(\*x)& =\sum_{j=1}^{k}p^{\innn}_{\mathcal{X}|\mathcal{Y}}(\*x|y_{j})\cdot p^{\innn}_{\mathcal{Y}}(y_j)\\
& = \frac{1}{k\sqrt{(2\pi)^{d}\abs{\Sigma}}} \sum_{j=1}^{k} \exp(-\frac{1}{2}(\*x-\boldsymbol{\mu}_{j})^{\top}\Sigma^{-1}(\*x-\boldsymbol{\mu}_{j})),
\end{align*}
which is mixture of $k$ Gaussian distributions. 

\paragraph{Bayes Optimal Classifier} Under the Gaussian mixture model, the posterior probability of a Bayes optimal classifier for class $y_i \in \mathcal{Y}$ is given by,
\begin{align}
    p_{\mathcal{Y}|\mathcal{X}}(y_i|\*x) &= \frac{p_{\mathcal{Y}}(y_i)p_{\mathcal{X}|\mathcal{Y}}(\*x|y_i)}{\sum_{j=1}^{k} p_{\mathcal{Y}}(y_j)p_{\mathcal{X}|\mathcal{Y}}(\*x|y_j)}\\
    & = \frac{\exp(-\frac{1}{2}(\*x-\boldsymbol{\mu}_{i})^{\top}\Sigma^{-1}(\*x-\boldsymbol{\mu}_{i}))}{\sum_{j=1}^{k} \exp(-\frac{1}{2}(\*x-\boldsymbol{\mu}_{j})^{\top}\Sigma^{-1}(\*x-\boldsymbol{\mu}_{j}))}\\
    & = \frac{\exp f_i(\*x)}{\sum_{j=1}^{k} \exp f_j(\*x)} \label{eq:posterior},
\end{align}
where $f:\mathcal{X} \to \R^{k}$ is a function mapping to the logits. One can note that the above form of posterior distribution is equivalent to applying the softmax function on the logits $f(\*x)$, where,
\begin{align*}
f_{i}(\*x)=-\frac{1}{2}(\*x-\boldsymbol{\mu}_{i})^{\top}\Sigma^{-1}(\*x-\boldsymbol{\mu}_{i}),
\end{align*}
which is also known as the Mahalanobis distance~\citep{mahalanobis1936generalized}.

\subsection{OOD Scoring Functions and Their Optimality}
\label{sec:gmm_ood}
We now contrast several representative OOD scoring functions and also introduce our new scoring function GEM. Note that an ideal OOD detector should use a scoring function that is proportional to the data density. We focus on post hoc OOD detection methods, which have the advantages of being easy to use and general applicability without modifying the training procedure and objective. 
\paragraph{Prior: Maximum Softmax Score} 
\citeauthor{hendrycks2016baseline} propose using the maximum softmax score (MSP) for estimating OOD uncertainty,
\[ 
	g^{\text{MSP}}_{\lambda}(\*x)=\begin{cases} 
      \text{in} & \text{MSP}(f,\*x)\ge \lambda \\
      \text{out} & \text{MSP}(f,\*x) < \lambda 
  \end{cases}.
\]
The OOD scoring function is given by,
\begin{align*}
    \text{MSP}(f,\*x) &= \max_i p_{\mathcal{Y}|\mathcal{X}}(y_i|\*x) \\ & =\max_i \frac{1}{k\beta p_{\mathcal{X}}^{\innn}(\*x) }\exp(-\frac{1}{2}(\*x-\boldsymbol{\mu}_{i})^{\top}\Sigma^{-1}(\*x-\boldsymbol{\mu}_{i})) \\
    & \not\propto p^{\innn}_{\mathcal{X}}(\*x),
\end{align*}
where $\beta=\sqrt{(2\pi)^{d}\abs{\Sigma}}$. The above suggests that MSP is not aligned with the true data density, as illustrated in Figure~\ref{fig:function-2d}. For simplicity, we visualize the case when the input distribution is mixture of one-dimensional Gaussians, with two classes $\mathcal{Y}=\{+1,-1\}$. MSP can yield high score 1, and misclassify data points in low-likelihood regions such as $x>4$ or $x<-4$ (highlighted in red). Also, depending on threshold value $\lambda$, MSP may misclassify samples from neighbourhood around the origin (highlighted in gray).

\paragraph{Prior: Maximum Mahalanobis Distance} \citeauthor{lee2018simple} propose using the maximum Mahalanobis distance \emph{w.r.t} the closest class centroid for OOD detection. Specifically, the score is defined as:
\begin{align*}
    M(f, \*x) & = \max_i -(\*x-\boldsymbol{\mu}_{i})^{\top}\Sigma^{-1}(\*x-\boldsymbol{\mu}_{i})\\
    & \not\propto p^{\innn}_{\mathcal{X}}(\*x),
\end{align*}
which is equivalent to the maximum Mahalanobis distance. 
The corresponding OOD classifiers based on Mahalanobis score is,
\[ 
	g^{\text{Mahalanobis}}_{\lambda}(\*x)=\begin{cases} 
      \text{in} & M(f, \*x)\ge \lambda \\
      \text{out} & M(f, \*x) < \lambda 
  \end{cases}.
\]
The above suggests that Mahalanobis distance is not proportional to the true data density either, hence sub-optimal. 
\begin{figure}[t]
\begin{center}
\includegraphics[width=125 mm]{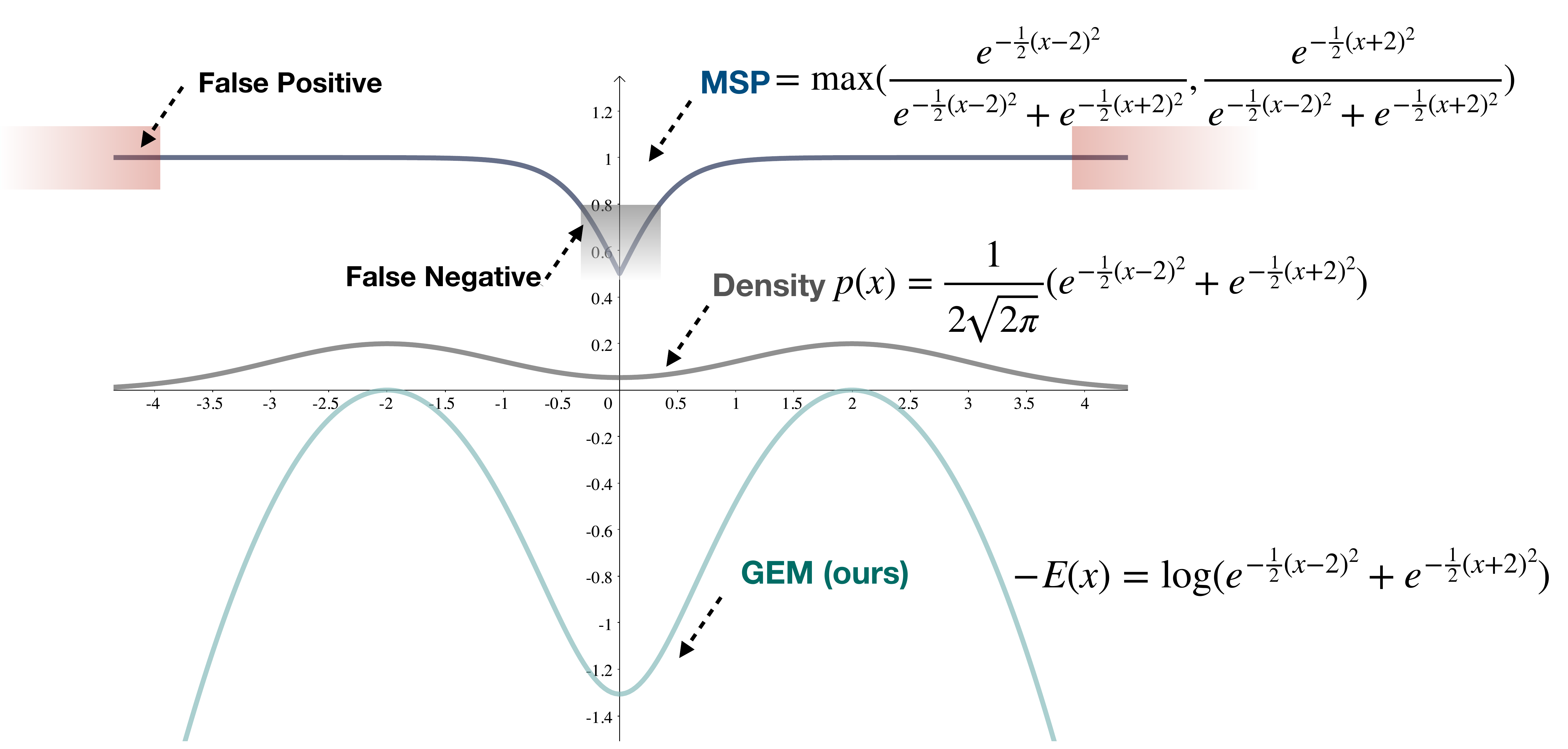}
\caption{\small Illustration of data density (\textbf{middle}, gray), maximum softmax score (\textbf{top}, black) and GEM score (\textbf{bottom}, blue) when $x\in \mathbb{R}$. The data distribution $p(x)$ is a mixture of two Gaussians with mean $
\mu_1=2$ and $\mu_2=-2$ respectively. The variance $\sigma_1=\sigma_2=1$. Under thresholding, MSP can not distinguish samples $x>4$ or $x<-4$ which have low-likelihood of being in-distribution. In contrast, GEM score (ours) is aligned with the true data density, and better captures OOD uncertainty.}
\vspace{-0.4cm}
\label{fig:function-2d}
\end{center}
\end{figure}
\paragraph{Prior: Energy Score} Given a function transformation $f: \mathcal{X} \to \mathbb{R}^k$, ~\citeauthor{liu2020energy}  propose using the free energy score for OOD detection. The free energy is defined to be the \texttt{-logsumexp} of logit outputs,
\begin{align}
\label{eq:energy}
\text{E}(f,\*x)& =-\log \sum_{j=1}^{k}\exp(f_{j}(\*x)),
\end{align}
where $f(\*x)=(f_{1}(\*x),...,f_{k}(\*x))^{\top} \in \R^{k}$. 
We provide a simple and concrete example to show there exists maximum likelihood solution with the same posterior probability as in Equation~\ref{eq:posterior}, but the resulting energy score is not aligned with the data density:
\begin{align*}
    p_{\mathcal{Y}|\mathcal{X}}(y_i|\*x) 
    & = \frac{\exp(-\frac{1}{2}(\*x-\boldsymbol{\mu}_{i})^{\top}\Sigma^{-1}(\*x-\boldsymbol{\mu}_{i}))}{\sum_{j=1}^{k} \exp(-\frac{1}{2}(\*x-\boldsymbol{\mu}_{j})^{\top}\Sigma^{-1}(\*x-\boldsymbol{\mu}_{j}))} \\
    & = \frac{\exp(\boldsymbol{\mu}_i^{\top}\Sigma^{-1}\*x-\frac{1}{2}\boldsymbol{\mu}_{i}^{\top}\Sigma^{-1}\boldsymbol{\mu}_{i})}{\sum_{j=1}^{k} \exp(\boldsymbol{\mu}_j^{\top}\Sigma^{-1}\*x-\frac{1}{2}\boldsymbol{\mu}_{j}^{\top}\Sigma^{-1}\boldsymbol{\mu}_{j})} \\
    &= \frac{\exp f'_i(\*x)}{\sum_{j=1}^{k} \exp f'_j(\*x)}, 
\end{align*} 
where $f'(\*x):=(f'_{1}(\*x),...,f'_{k}(\*x))^{\top}\in \R^{k}$ can be viewed as a single layer network's output with (row) weights $\boldsymbol{\mu}_i^{\top}\Sigma^{-1}$, for $1\le i\le k$, and biases $-\frac{1}{2}\boldsymbol{\mu}_{i}^{\top}\Sigma^{-1}\boldsymbol{\mu}_{i}$, for $1\le i\le k$, and the corresponding energy $-\log \sum_j \exp f'_j(\*x)$ is not aligned with the log-likelihood, hence not always optimal.

\paragraph{New: GEM Score} We now introduce a new scoring function, Gaussian mixture based energy measurement (dubbed \emph{GEM}). The GEM score can be written as,
\begin{align*}
\text{GEM}(f,\*x)  = - E(f,\*x) =  &  \log \sum_{j=1}^k \exp(-\frac{1}{2}(\*x-\boldsymbol{\mu}_{j})^{\top}\Sigma^{-1}(\*x-\boldsymbol{\mu}_{j}))\\
& \propto \log p_{\mathcal{X}}^{\innn}(\*x), 
\end{align*}
 which suggests that the \emph{GEM score} is proportional (by ignoring a constant term) to the log-likelihood of the in-distribution data. Note that we flip the sign of free energy to align with the convention that larger GEM score indicates more ID-ness and vice versa.  
The key difference here is that the GEM score is a \emph{special case} of negative free energy, where each $f_j(\*x)$ in Equation~\ref{eq:energy} takes on the form of Mahalanobis distance instead of directly using the logit outputs,
\begin{align*}
f_{j}(\*x)=-\frac{1}{2}(\*x-\boldsymbol{\mu}_{j})^{\top}\Sigma^{-1}(\*x-\boldsymbol{\mu}_{j}).
\end{align*}
In Figure~\ref{fig:function-2d}, we show the alignment between the GEM (light green) and true data density function (gray), in a simplified case with $x\in \mathbb{R}$, $k=2$ and $\mu_1=2,\mu_2=-2$. The corresponding OOD classifiers based on energy score is,

\[ 
	g^{\text{GEM}}_{\lambda}(\*x)=\begin{cases} 
      \text{in} & \text{GEM}(f,\*x)\ge \lambda \\
      \text{out} & \text{GEM}(f,\*x) < \lambda 
  \end{cases}.
\]
This leads to the following lemma that shows the optimality of the GEM estimator. 
\begin{lem}
\label{lem:simple_gmm}
In the case of Gaussian conditional with equal priors, the GEM based OOD estimator performs similarly to the ideal classifier defined in Section \ref{sec:OOD}. More specifically,
\begin{align*}
g^{ideal}_{\lambda}=g^{\text{GEM}}_{\log(k\beta\cdot\lambda)},
\end{align*}
where $\beta=\sqrt{(2\pi)^{d}\abs{\Sigma}}$ and both the ideal classifier and our method are aligned with $P_{\mathcal{X}}^{\innn}$ by definition. 
\end{lem}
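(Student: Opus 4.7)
The plan is essentially an algebraic matching argument: show that the GEM score is a strictly monotone transformation of the density $p^{\innn}_{\mathcal{X}}$, and then read off the correspondence between the two thresholds.

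First, I would write out the in-distribution density explicitly using the mixture formula derived earlier,
\begin{align*}
p_{\mathcal{X}}^{\innn}(\*x) = \frac{1}{k\beta}\sum_{j=1}^{k}\exp\!\left(-\tfrac{1}{2}(\*x-\boldsymbol{\mu}_{j})^{\top}\Sigma^{-1}(\*x-\boldsymbol{\mu}_{j})\right),
\end{align*}
with $\beta=\sqrt{(2\pi)^{d}\abs{\Sigma}}$. Comparing this term-by-term with the definition of $\text{GEM}(f,\*x)$ yields the identity
\begin{align*}
\text{GEM}(f,\*x) = \log\!\left(k\beta\cdot p_{\mathcal{X}}^{\innn}(\*x)\right),
\end{align*}
because the sum inside the logarithm of GEM is exactly $k\beta$ times the density. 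This is the only substantive calculation; everything else is threshold bookkeeping.

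Next, I would use monotonicity of $\log$ and positivity of $k\beta$ to translate the level-set decisions. For any $\lambda>0$, the inequality $p_{\mathcal{X}}^{\innn}(\*x)\ge \lambda$ is equivalent to $k\beta\cdot p_{\mathcal{X}}^{\innn}(\*x)\ge k\beta\lambda$, which is equivalent to $\text{GEM}(f,\*x)\ge \log(k\beta\lambda)$. Hence for every $\*x$,
\begin{align*}
g^{\text{ideal}}_{\lambda}(\*x) = \innn \iff g^{\text{GEM}}_{\log(k\beta\lambda)}(\*x) = \innn,
\end{align*}
and the same equivalence holds for the $\out$ branch since both classifiers are binary and use the same boundary convention. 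Setting the GEM threshold to $\log(k\beta\lambda)$ therefore gives $g^{\text{ideal}}_{\lambda}=g^{\text{GEM}}_{\log(k\beta\lambda)}$ pointwise on $\mathcal{X}$.

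There is essentially no hard step here; the only subtlety to double-check is that $\lambda$ is assumed positive (which is implicit since $\lambda$ is a density level chosen to capture a high TPR such as $95\%$), so that $\log(k\beta\lambda)$ is well defined, and that the boundary case $p^{\innn}_{\mathcal{X}}(\*x)=\lambda$ is handled consistently by both classifiers under the convention $\ge$ used in their definitions. With those bookkeeping points noted, the lemma follows immediately from the monotone identification above.
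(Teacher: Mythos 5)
Your proposal is correct and matches the paper's own reasoning: the paper establishes exactly the identity $\text{GEM}(f,\*x)=\log\bigl(k\beta\, p^{\innn}_{\mathcal{X}}(\*x)\bigr)$ via the mixture density and treats the lemma as an immediate consequence of this monotone relation between GEM and the density, with the same threshold translation $\lambda \mapsto \log(k\beta\lambda)$. Your added bookkeeping about $\lambda>0$ and the boundary convention is a harmless refinement of the same argument.
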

\begin{rmk}
\label{rmk:weighted}
\normalfont
Note that the equal prior case is considered to convey the main idea in simplest possible form. To make it more general, a weighted version of  GEM can be used to achieve the optimality for the non-equal prior case. More precisely, let  $w_{i}=p^{\innn}_{\mathcal{Y}}(y_{i})$, then we have,
\begin{align*}
p_{\mathcal{X}}^{\innn}(\*x) &=\sum_{j=1}^{k}w_{j}p^{\innn}_{\mathcal{X}|\mathcal{Y}}(\*x|y_{j})\\
& \propto \sum_{j=1}^{k} w_{j}\exp(-\frac{1}{2}(\*x-\boldsymbol{\mu}_{j})^{\top}\Sigma^{-1}(\*x-\boldsymbol{\mu}_{j})).
\end{align*}
Now if we define the {\it weighted GEM} by,
\begin{align*}
    \text{GEM}^{w}(f,\*x) & := \log \sum_{j=1}^k w_{j}\exp(-\frac{1}{2}(\*x-\boldsymbol{\mu}_{j})^{\top}\Sigma^{-1}(\*x-\boldsymbol{\mu}_{j})),
\end{align*}
then arguing similar to Lemma \ref{lem:simple_gmm} implies that weighted GEM would be aligned with the ideal classifier in the non-equal prior case.
\end{rmk}

\begin{wrapfigure}{r}{0.3\textwidth}
\begin{center}
\vspace{-0.6cm}
\includegraphics[width=0.3\textwidth]{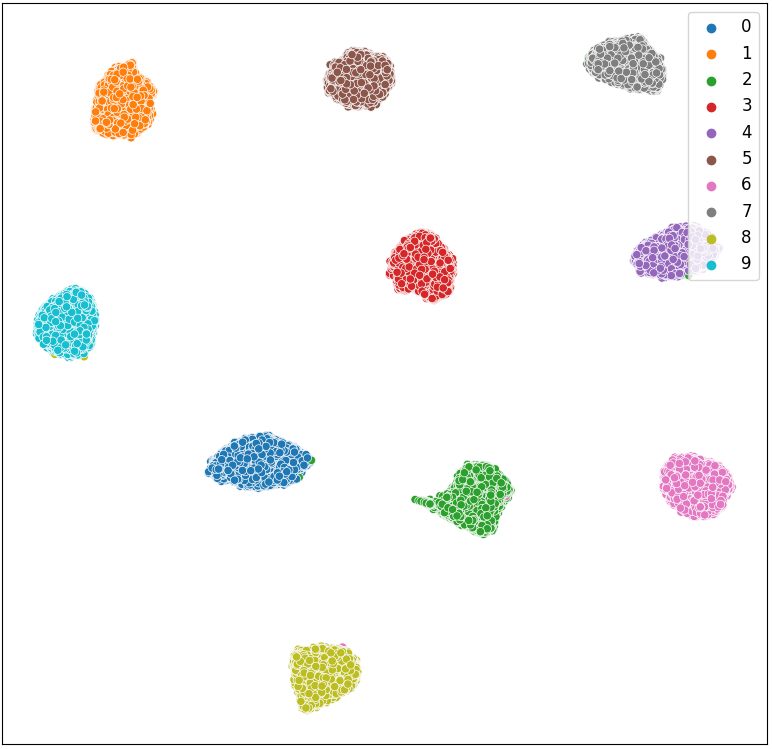}
\caption{\small UMAP visualization of embeddings for CIFAR-10 model. 
}
\label{fig:umap} 
\vspace{-0.6cm}
\end{center}
\end{wrapfigure}

%%%%%%%%%%%%%%%%%%%%%%%%%%%%%%%%%%%%%%%%%%%%%%%%%%%%%%%%%%%%
\section{OOD Detection for Deep Neural Networks}
\label{sec:neural-networks}
%%%%%%%%%%%%%%%%%%%%%%%%%%%%%%%%%%%%%%%%%%%%%%%%%%%%%%%%%%%%
In this section, we extend our analysis and method to deep neural networks. To start, let $h(\*x;\theta) \in \mathbb{R}^m$ be the feature vector of the input $\*x$, extracted from the penultimate layer of a neural net parameterized by $\theta$. 
We assume that a class-conditional distribution in the feature space follows the multivariate Gaussian distribution. Such an assumption has been empirically validated in~\citep{lee2018simple}; also see visualizations in Figure~\ref{fig:umap}. Specifically, a $k$ class-conditional Gaussian distribution with a tied covariance is defined as, 
\begin{align*}
    h(\*x;\theta)|y_i \sim \mathcal{N}(\boldsymbol{u}_{i}, \bar{\Sigma}),
\end{align*}
where $\boldsymbol{u}_{i}\in \R^{m}$ is the mean of class $y_i$ and $\bar{\Sigma} \in \mathbb{R}^{m\times m}$ is the covariance matrix. To estimate the parameters of the generative model from the pre-trained neural classifier,
one can compute the empirical class mean and covariance given training samples $\{(\*x_1,\bar{y}_1), (\*x_2,\bar{y}_2),...,(\*x_N,\bar{y}_N)\}$,
\begin{align*}
    \boldsymbol{\hat u}_i &= \frac{1}{N_i} \sum_{j:\bar{y}_j=y_{i}} h(\*x_j;\theta), \\
    \hat\Sigma & = \frac{1}{N} \sum_{i=1}^{k} \sum_{j:\bar{y}_j=y_{i}} (h(\*x_j;\theta)-\boldsymbol{\hat u}_i)(h(\*x_j;\theta)-\boldsymbol{\hat u}_i)^T,
\end{align*}
where $N_i$ is the number of training samples with label $y_{i}\in \mathcal{Y}$. We can define  the ideal classifier with respect to {\it feature space} to be,
\begin{align}
\label{ideal-GMM}
	g^{\text{ideal}}_{\lambda}(\*x)=\begin{cases} 
      \text{in} & p^{\featt}(\*x)\ge \lambda \\
      \text{out} & p^{\featt}(\*x) < \lambda,
  \end{cases},
\end{align}
where $p^{\featt}$ denotes the density function of the posterior distribution on the feature space induced by $h(\*x,\theta)$. 

\paragraph{GEM for Neural Networks} Similar to our definition in Section~\ref{GDA}, \emph{GEM} for neural networks can be defined as
\begin{align*}
\text{GEM}(\*x;\theta)& =\log \sum_{j=1}^{k}\exp(f_{j}(\*x;\theta)),
\end{align*}
where $f_j(\*x;\theta)= -\frac{1}{2}(h(\*x;\theta)-\boldsymbol{u}_{j})^{\top}{\bar{\Sigma}}^{-1}(h(\*x;\theta)-\boldsymbol{u}_{j})$. We can empirically estimate each $f_j(\*x;\theta)$ by, 
\begin{align*}
\hat f_j(\*x;\theta)= -\frac{1}{2}(h(\*x;\theta)-\boldsymbol{\hat u}_{j})^{\top}{\hat{\Sigma}}^{-1}(h(\*x;\theta)-\boldsymbol{\hat u}_{j}).
\end{align*}
It follows from an analogue of Lemma \ref{lem:simple_gmm} that $g^{\text{GEM}}_{\lambda}$, computed from feature space, performs similarly to the ideal classifier that we defined by Equation \ref{ideal-GMM}.
\begin{lem}
\label{lem:feature_gmm}
The performance of GEM based detection is same as the ideal classifier (with respect to the feature space) defined by Equation \ref{ideal-GMM} :
\begin{align*}
g^{ideal}_{\lambda}=g^{\text{GEM}}_{\log(k\bar{\beta}\cdot\lambda)},
\end{align*}
% \begin{align*}
% g^{ideal}_{\log(\frac{\lambda}{k\cdot\bar{\beta}})}=g^{\text{GEM}}_{\lambda},
% \end{align*}
where $\bar{\beta}=\sqrt{(2\pi)^{m}\abs{\bar{\Sigma}}}$.
\end{lem}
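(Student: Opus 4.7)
The plan is to mirror the argument used for Lemma \ref{lem:simple_gmm}, now carried out in the feature space induced by $h(\cdot;\theta)$ rather than the input space. Since the class-conditional feature distribution is $h(\*x;\theta)\mid y_i \sim \mathcal{N}(\boldsymbol{u}_i,\bar{\Sigma})$ with equal priors $1/k$, the marginal feature density evaluated along the embedding of a test point $\*x$ is a mixture of $k$ Gaussians. Writing it out,
\begin{align*}
p^{\featt}(\*x) = \frac{1}{k\bar{\beta}} \sum_{j=1}^{k} \exp\!\left(-\tfrac{1}{2}(h(\*x;\theta)-\boldsymbol{u}_{j})^{\top}\bar{\Sigma}^{-1}(h(\*x;\theta)-\boldsymbol{u}_{j})\right),
\end{align*}
where the normalizing constant $\bar{\beta}=\sqrt{(2\pi)^{m}|\bar{\Sigma}|}$ is exactly the one appearing in the statement.

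Next I would take logarithms and match the resulting expression to the GEM score. Recognising that each summand equals $\exp f_j(\*x;\theta)$ in the notation of the preceding paragraph, the log-density splits cleanly as
\begin{align*}
\log p^{\featt}(\*x) = -\log(k\bar{\beta}) + \log\sum_{j=1}^{k}\exp f_j(\*x;\theta) = -\log(k\bar{\beta}) + \text{GEM}(\*x;\theta).
\end{align*}
Thus the two scores differ only by the additive constant $\log(k\bar{\beta})$, which does not depend on $\*x$.

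Finally I would convert the thresholds. Since $\log$ is strictly increasing, the event $\{p^{\featt}(\*x)\ge \lambda\}$ is equivalent to $\{\log p^{\featt}(\*x)\ge \log\lambda\}$, which by the identity above is equivalent to $\{\text{GEM}(\*x;\theta)\ge \log(k\bar{\beta}\cdot\lambda)\}$. Matching the piecewise definitions of $g^{\text{ideal}}_\lambda$ in Equation \ref{ideal-GMM} and $g^{\text{GEM}}_{\log(k\bar{\beta}\cdot\lambda)}$ yields the claimed equality of classifiers for every $\*x$.

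There is essentially no obstacle here: the only subtlety worth flagging is that the computation is performed using the \emph{population} mean $\boldsymbol{u}_i$ and covariance $\bar{\Sigma}$ rather than their empirical estimates $\boldsymbol{\hat u}_i, \hat{\Sigma}$, so the equivalence is exact at the level of the generative model but would incur an estimation error if one instead used $\hat f_j$; addressing that discrepancy is outside the scope of this lemma and is handled later in Section \ref{sec:bound}.
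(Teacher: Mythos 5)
Your proof is correct and follows the same route the paper intends: write the feature-space density as the equal-prior Gaussian mixture evaluated at $h(\*x;\theta)$, note that $\log p^{\featt}(\*x)$ equals $\text{GEM}(\*x;\theta)$ minus the constant $\log(k\bar{\beta})$, and translate the threshold $\lambda$ into $\log(k\bar{\beta}\cdot\lambda)$; this is exactly the "analogue of Lemma~\ref{lem:simple_gmm}" the paper invokes without spelling out. Your remark about population versus empirical parameters is a reasonable clarification and does not change the argument.
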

We also note that Lemma \ref{lem:feature_gmm} can be extended to non-equal prior case by arguing similar to Remark \ref{rmk:weighted}.
\begin{table*}[t!]
%\label{table:results}
\centering
\scalebox{0.95}{
                \begin{tabular}{llrrrr}
                        \toprule
     \multirow{3}{0.08\linewidth}{In-distribution} &  \multirow{3}{0.06\linewidth}{\textbf{Method}}  &\textbf{FPR95}  & \textbf{AUROC}  & \textbf{AUPR} & \textbf{In-dist}  \\
                        & &  &  & $\textbf{}$ & $\textbf{Test Error}$  \\
                        & & $\downarrow$  & $\uparrow$ & $\uparrow$ & $\downarrow$ \\ \midrule
           \multirow{6}{0.1\linewidth}{{\textbf{ CIFAR-10} }}
                        & Softmax score \citep{hendrycks2016baseline}
                        & 51.04 & 90.90 & 97.92 & 5.16\\
                        & ODIN \citep{liang2018enhancing}
                        & 35.71 & 91.09 & 97.62 & 5.16 \\
                        & Mahalanobis \citep{lee2018simple}
                        & 36.96 & 93.24 & 98.47 & 5.16 \\
                         &  Energy score~\citep{liu2020energy}
                        & 33.01 & 91.88 & 97.83 & 5.16\\
                      &  GEM (ours) 
                     & 37.21 & 93.23 & 98.47 & 5.16\\
                        \midrule
                 \multirow{6}{0.12\linewidth}{\textbf{CIFAR-100} }
                        & Softmax score \citep{hendrycks2016baseline}
                        & 80.41 & 75.53 & 93.93 & 24.04\\
                        & ODIN \citep{liang2018enhancing}
                        & 74.64 & 77.43 & 94.23 & 24.04\\
                        & Mahalanobis \citep{lee2018simple}
                        & 57.01 & 82.70 & 95.68 & 24.04\\
                         &    Energy score~\citep{liu2020energy}
                        & 73.60 & 79.56 & 94.87 & 24.04\\
                        &  GEM (ours)    
                        & 57.03 & 82.67 & 95.66& 24.04\\
                        \bottomrule
                \end{tabular}}
        \vspace{-0.2cm}
        \caption[]{\small \textbf{Main Results}. Comparison with  competitive \emph{post hoc} OOD detection methods. $\uparrow$ indicates larger values are better, and $\downarrow$ indicates smaller values are better. All values are percentages. Results for OOD detection are averaged over the six OOD test datasets  described in section \ref{sec:experiment-nn}. Numbers for individual OOD test datasets are in the Appendix. The reported results for benchmarks other than GEM are courtesy of \citep{liu2020energy}.}
        \label{tab:main-results}
        \vspace{-0.2cm}
\end{table*}

\subsection{Experimental Results}
\label{sec:experiment-nn}
\paragraph{Setup} 

We use  CIFAR-10 and CIFAR-100~\citep{krizhevsky2009learning} datasets as in-distribution data. We use the standard split, and train with WideResNet architecture~\citep{zagoruyko2016wide} with depth 40.  For the OOD test dataset, we use the following six datasets: \texttt{Textures}~\citep{cimpoi2014describing}, \texttt{SVHN}~\citep{netzer2011reading}, \texttt{Places365}~\citep{zhou2017places}, \texttt{LSUN-Crop}~\citep{yu2015lsun}, \texttt{LSUN-Resize}~\citep{yu2015lsun}, and \texttt{iSUN}~\citep{xu2015turkergaze}. 
We report standard metrics including FPR95 (false positive rate of OOD examples when the true positive rate of in-distribution examples is at 95\%), AUROC, and AUPR.

\paragraph{GEM is both empirically competitive and theoretically grounded.}
Table \ref{tab:main-results} compares the performance of the GEM method with common OOD detection methods. For fairness, all methods derive OOD scoring functions post hoc from the same pre-trained model. For example, on CIFAR-100 as in-distribution data, GEM outperforms the energy score~\citep{liu2020energy} by {16.57}\% (FPR95). Compared to ~\citep{lee2018simple}, our method is more theoretically grounded than taking the maximum Mahalanobis distance. We note that the similar empirical performance is primarily due to \texttt{log-sum-exp} being a smooth approximation of maximum Mahalanobis distance in the feature space (more details in Remark \ref{rmk:mahal} below). Therefore, our method overall achieves both strong empirical performance and theoretical soundness---bridging a critical gap under unified understandings.

\subsection{Remarks}

\begin{rmk}[Significance \emph{w.r.t} Mahalanobis] 
\label{rmk:mahal}
\normalfont
The main difference \emph{w.r.t}~\citep{lee2018simple} is that we are taking the \texttt{log-sum-exp} over Mahalanobis distances  $M_i$, instead of taking the \texttt{maximum} Mahalanobis distance. This was  motivated by our theoretical analysis in previous Section where taking \texttt{log-sum-exp} would be aligned with likelihood (\emph{w.r.t} feature space), whereas \texttt{max} is not exact in theory. In other words, {we bring theoretical rigor to an empirically competitive method}. Mathematically, $\log \sum_i^k \exp(M_i)\approx \max_i M_i$ with the following bound:
$
\max_i M_i \le \log \sum_i^k \exp(M_i) \le \max_i M_i + \log (k)
$. Therefore, our method overall achieves equally strong empirical performance yet with theoretical soundness and guarantees (see formal analysis in Section~\ref{sec:bound}). 
\end{rmk}
\begin{lem}
\label{lem:mahal_gmm}
In the case of Gaussian conditional with equal priors in the feature space, the Mahalanobis-based OOD estimator is not aligned with the density of in-distribution data in the feature space and it is not equivalent to the ideal classifier defined by Equation \ref{ideal-GMM}.
\end{lem}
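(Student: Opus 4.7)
The plan is to prove the lemma by exhibiting an explicit counterexample showing that the Mahalanobis score and the feature-space density do not share the same level sets. For the classifier $g^{\text{Mahalanobis}}_{\lambda'}$ to coincide with $g^{\text{ideal}}_\lambda$ for some choice of thresholds, the Mahalanobis score $M$ would need to be a strictly monotone function of the density $p^{\featt}$; equivalently, any two points with equal $M$-value must have equal density. I will produce two points $\*x_1,\*x_2$ in the feature space with $M(\*x_1)=M(\*x_2)$ but $p^{\featt}(\*x_1)\neq p^{\featt}(\*x_2)$, and then choose a threshold $\lambda$ strictly between their densities to force the ideal and Mahalanobis classifiers to disagree.

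The key structural observation that drives the counterexample is that $M(\*x)=-\min_j(\*x-\boldsymbol{u}_j)^\top\bar{\Sigma}^{-1}(\*x-\boldsymbol{u}_j)$ depends only on the distance to the \emph{nearest} centroid, while $p^{\featt}(\*x)\propto\sum_{j=1}^{k}\exp\!\bigl(-\tfrac{1}{2}(\*x-\boldsymbol{u}_j)^\top\bar{\Sigma}^{-1}(\*x-\boldsymbol{u}_j)\bigr)$ aggregates contributions from \emph{all} centroids. So the density is strictly larger at points that sit reasonably close to several centroids than at points that are equidistant to only one, even when the nearest-centroid distances match.

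For the explicit construction I will instantiate the simplest nontrivial case: $k=2$, $m=1$, $\bar{\Sigma}=1$, $\boldsymbol{u}_1=2$, $\boldsymbol{u}_2=-2$, mirroring the setting in Figure~\ref{fig:function-2d}. Taking $\*x_1=1$ and $\*x_2=3$ yields $M(\*x_1)=M(\*x_2)=-1$ since in both cases the minimum squared distance to a centroid equals $1$. However
\begin{align*}
p^{\featt}(\*x_1)&\propto e^{-1/2}+e^{-9/2},\\
p^{\featt}(\*x_2)&\propto e^{-1/2}+e^{-25/2},
\end{align*}
so $p^{\featt}(\*x_1)>p^{\featt}(\*x_2)$. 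Choosing any density threshold $\lambda$ with $p^{\featt}(\*x_2)<\lambda<p^{\featt}(\*x_1)$ makes $g^{\text{ideal}}_\lambda(\*x_1)=\innn$ while $g^{\text{ideal}}_\lambda(\*x_2)=\out$, whereas every threshold $\lambda'$ applied to $M$ must assign $\*x_1$ and $\*x_2$ the same label because $M(\*x_1)=M(\*x_2)$. This simultaneously establishes that $M$ is not a monotone transform of $p^{\featt}$ (hence \emph{not aligned} with the in-distribution density) and that no choice of thresholds can make $g^{\text{Mahalanobis}}_{\lambda'}$ agree with $g^{\text{ideal}}_\lambda$.

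There is no real obstacle: the argument is essentially a level-set mismatch, and the only care needed is to pick the counterexample so both points have the same nearest-centroid distance yet detectably different total densities; the asymmetric-versus-symmetric pair above does this in closed form. If one wants a counterexample that avoids the corner case of equal $M$-values and instead shows the ranking itself is inverted, one can slightly perturb $\*x_2$ outward (say $\*x_2=3+\varepsilon$) so that $M(\*x_2)<M(\*x_1)$ while the density comparison $p^{\featt}(\*x_1)>p^{\featt}(\*x_2)$ remains, producing an ordering reversal and strengthening the conclusion.
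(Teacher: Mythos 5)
Your main argument is correct and complete, and it is actually more careful than what the paper itself offers for this lemma: the paper never gives a formal proof, only the informal observation in Section~\ref{sec:gmm_ood} that $\max_i M_i$ is not proportional to the mixture density (which is a \texttt{log-sum-exp} over \emph{all} centroids), together with the bound $\max_i M_i \le \log\sum_i \exp(M_i)\le \max_i M_i+\log k$ in Remark~\ref{rmk:mahal}. Mere non-proportionality would not by itself rule out equivalence of the thresholded classifiers, since threshold rules only require the two scores to be related by a monotone transform; your level-set argument --- two points $\*x_1=1$, $\*x_2=3$ with $M(\*x_1)=M(\*x_2)=-1$ but $p^{\featt}(\*x_1)\propto e^{-1/2}+e^{-9/2} > e^{-1/2}+e^{-25/2}\propto p^{\featt}(\*x_2)$, with a density threshold chosen strictly between them --- closes exactly that gap and establishes both the non-alignment and the non-equivalence to $g^{\text{ideal}}_\lambda$ in Equation~\ref{ideal-GMM}. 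So you are proving the same underlying point (the max sees only the nearest centroid while the density aggregates all of them) but by an explicit counterexample rather than the paper's assertion, which buys actual rigor at no extra cost.

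One small correction to your optional aside: perturbing $\*x_2$ \emph{outward} to $3+\varepsilon$ gives $M(\*x_2)<M(\*x_1)$ \emph{and} $p^{\featt}(\*x_2)<p^{\featt}(\*x_1)$, i.e.\ the two scores then order the points consistently, which is not a reversal. To exhibit a genuine ranking inversion, perturb toward the region between the centroids instead --- e.g.\ take $\*x_2=3-\varepsilon$ (so $M(\*x_2)>M(\*x_1)$ while $p^{\featt}(\*x_2)<p^{\featt}(\*x_1)$ for small $\varepsilon$), or move $\*x_1$ slightly toward the origin. This does not affect your main proof, which stands on the equal-$M$, unequal-density pair alone.
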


\begin{rmk}[Significance \emph{w.r.t} Energy Score]
\label{rmk:energy}
\normalfont
The energy score in \citep{liu2020energy} was derived directly from the logit outputs, rather than a Gaussian generative model as in ours. As a result, the original energy score might not always correspond to the Bayes optimal logit to ensure alignment \emph{w.r.t} likelihood (we showed this by an explicit example in Section \ref{GDA}). Instead, our analytical
framework and method provide strong provable guarantees (c.f. Section~\ref{sec:bound}) and enable precise understanding by disentangling the effects of various factors (c.f. Section~\ref{sec:bound}), both of which were not presented in \citep{liu2020energy}. Moreover, we show empirically that GEM achieves strong empirical performance, outperforming energy score by a significant margin ({16.57}\% in FPR95 on CIFAR-100, see Table~\ref{tab:main-results}).
\end{rmk}

% %%%%%%%%%%%%%%%%%%%%%%%%%%%%%%%%%%%%%%%%%%
\section{Provable Guarantees for GEM}
\label{sec:bound}
%%%%%%%%%%%%%%%%%%%%%%%%%%%%%%%%%%%%%%%%%%%
The main goal of this section is to provide rigorous guarantees and understandings for our method GEM. This is important but often missing in previous literature on OOD detection. 

Let $P_{\mathcal{X}}^{\innn}$ be a mixture of Gaussians (similar to Section \ref{GDA}) and assume $P^{\out}_{\mathcal{X}}=\mathcal{N}(\boldsymbol{\mu}_{\out},\Sigma)$. We can think of $\mathcal{X}$ as either the feature space or input space of a deep neural net. We work with the re-scaled version of the GEM score (by omitting the $\log$ operator), which does not change the formal guarantees. 
\begin{align*}
	ES(\*x)=\sum_{i=1}^{k}ES_{i}(\*x),
\end{align*}
where,
\begin{align*}
	ES_{i}(\*x)=\exp(-\frac{1}{2}(\*x-\boldsymbol{\mu}_{i})^{\top}\Sigma^{-1}(\*x-\boldsymbol{\mu}_{i})).
	\end{align*}
%Notice that $ES(x)\in [0,k]$. 
Next, we consider the following quantity,
\begin{align*}
\label{ast}
D:=\mathbb{E}_{\*x \sim P^{\innn}_{\mathcal{X}}}	(ES(\*x))-\mathbb{E}_{\*x \sim P^{\out}_{\mathcal{X}}}	(ES(\*x)).
\end{align*}
Intuitively, we can think of $D$ as a measure of how well GEM distinguishes ID samples from OOD samples. For example, when $\boldsymbol{\mu}_{\out}$ is far away from $\boldsymbol{\mu}_{i}$ then we expect $\mathbb{E}_{\*x \sim P^{\out}_{\mathcal{X}}}(ES(\*x))$ to be small (i.e., $D$ is large), and we expect that the our OOD estimator performs better compared to the case when $\boldsymbol{\mu}_{\out}$ is close to $\boldsymbol{\mu}_{i}$ (i.e., $D$ is small). We make this intuition precise by bounding $D$ in terms of Mahalanobis distance between $\boldsymbol{\mu}_{\out}$ and $\boldsymbol{\mu}_{i}$. First, we recall the following definition and set some notations,
\begin{deff}
\label{def:mahal}
 For $\*u,\*v \in \R^{d}$, the Mahalanobis distance, with respect to $\Sigma$, is defined by,
\begin{align*}
d_{M}(\*u,\*v):=\sqrt{(\*u-\*v)^{\top}\Sigma^{-1}(\*u-\*v)},
\end{align*}
and for $r>0$ the open ball with center $\*u$ and radius $r$ is defined by,
\begin{align*}
B_{r}(\*u):=\{\*x \in \R^{d}|d_{M}(\*x,\*u)< r\}.
\end{align*}
\end{deff}
Next, we can state the following theorem. 
\begin{thm}
\label{paper:main_thm}
We have the following bounds,	
\begin{itemize}
    \item 
    
	$\mathbb{E}_{\*x \sim P^{\out}_{\mathcal{X}}}	(ES(\*x))\le \sum_{i=1}^{k}\Big{(}\big{(}1-P_{\mathcal{X}}^{\out}(B_{\alpha_{i}}(\boldsymbol{\mu}_{\out}))\big{)}+\exp(-\frac{1}{2}\alpha_{i}^{2})\Big{)}$,
	\item 
	$\mathbb{E}_{\*x \sim P^{\innn}_{\mathcal{X}}}	(ES(\*x))-\mathbb{E}_{\*x \sim P^{\out}_{\mathcal{X}}}	(ES(\*x))\le \sum_{i=1}^{k}\alpha_{i}$,
\end{itemize}
where, for $1\le i\le k$, $\alpha_{i}:=\frac{1}{2}d_{M}(\boldsymbol{\mu}_{i},\boldsymbol{\mu}_{\out})$.
\end{thm}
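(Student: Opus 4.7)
The plan is to handle both inequalities componentwise via the decomposition $ES=\sum_{i=1}^{k}ES_{i}$, where each $ES_{i}(\*x)=\exp(-\tfrac{1}{2}d_{M}(\*x,\boldsymbol{\mu}_{i})^{2})$ lies in $[0,1]$, and then sum the per-index estimates.

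For the first inequality, I would fix $i$ and split the integral defining $\mathbb{E}_{P^{\out}_{\mathcal{X}}}(ES_{i})$ over the Mahalanobis ball $B_{\alpha_{i}}(\boldsymbol{\mu}_{\out})$ and its complement. The radius $\alpha_{i}=\tfrac{1}{2}d_{M}(\boldsymbol{\mu}_{i},\boldsymbol{\mu}_{\out})$ is chosen precisely so that the triangle inequality for $d_{M}$ (a bona fide metric since $\Sigma$ is positive definite) yields, for every $\*x\in B_{\alpha_{i}}(\boldsymbol{\mu}_{\out})$,
\[ d_{M}(\*x,\boldsymbol{\mu}_{i})\ge d_{M}(\boldsymbol{\mu}_{i},\boldsymbol{\mu}_{\out})-d_{M}(\*x,\boldsymbol{\mu}_{\out})>2\alpha_{i}-\alpha_{i}=\alpha_{i}, \]
and hence $ES_{i}(\*x)<\exp(-\tfrac{1}{2}\alpha_{i}^{2})$ throughout the ball, while on the complement I would use only the trivial pointwise bound $ES_{i}\le 1$. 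Adding the two contributions gives $\mathbb{E}_{P^{\out}_{\mathcal{X}}}(ES_{i})\le \exp(-\tfrac{1}{2}\alpha_{i}^{2})+\bigl(1-P^{\out}_{\mathcal{X}}(B_{\alpha_{i}}(\boldsymbol{\mu}_{\out}))\bigr)$, and summing over $i$ yields the first inequality.

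For the second inequality, I would rewrite the per-component difference as
\[ \mathbb{E}_{P^{\innn}_{\mathcal{X}}}(ES_{i})-\mathbb{E}_{P^{\out}_{\mathcal{X}}}(ES_{i})=\int ES_{i}(\*x)\bigl(p^{\innn}_{\mathcal{X}}(\*x)-p^{\out}_{\mathcal{X}}(\*x)\bigr)\,d\*x \]
and exploit $0\le ES_{i}\le 1$ to dominate the right-hand side by the total variation distance $d_{\mathrm{TV}}(P^{\innn}_{\mathcal{X}},P^{\out}_{\mathcal{X}})$. Convexity of $d_{\mathrm{TV}}$ applied to the mixture $P^{\innn}_{\mathcal{X}}=\tfrac{1}{k}\sum_{j}\mathcal{N}(\boldsymbol{\mu}_{j},\Sigma)$ then reduces the task to bounding $d_{\mathrm{TV}}(\mathcal{N}(\boldsymbol{\mu}_{j},\Sigma),\mathcal{N}(\boldsymbol{\mu}_{\out},\Sigma))$ for each $j$, and this Gaussian TV is controlled by $\alpha_{j}$ via Pinsker's inequality combined with the closed form $\mathrm{KL}(\mathcal{N}(\boldsymbol{\mu}_{j},\Sigma)\,\|\,\mathcal{N}(\boldsymbol{\mu}_{\out},\Sigma))=\tfrac{1}{2}d_{M}(\boldsymbol{\mu}_{j},\boldsymbol{\mu}_{\out})^{2}=2\alpha_{j}^{2}$ (alternatively, after the change of variables $\*y=\Sigma^{-1/2}(\*x-\boldsymbol{\mu}_{\out})$ followed by a rotation, the TV can be read off as $2\Phi(\alpha_{j})-1\le \alpha_{j}$). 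Summing the $k$ component bounds converts the factor $\tfrac{1}{k}\sum_{j}\alpha_{j}$ into $\sum_{j}\alpha_{j}=\sum_{i}\alpha_{i}$.

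\textbf{Main obstacle.} Most steps are routine (integral splitting; convexity of $d_{\mathrm{TV}}$; a standard Gaussian TV bound). The substantive point is that the specific prefactor $\tfrac{1}{2}$ in $\alpha_{i}=\tfrac{1}{2}d_{M}(\boldsymbol{\mu}_{i},\boldsymbol{\mu}_{\out})$ must emerge in two genuinely different ways---from the $d_{M}$-triangle inequality in the first bound and from Pinsker applied to two Gaussians with common covariance in the second---and aligning these independent estimates so that both inequalities are stated in terms of the \emph{same} quantity $\alpha_{i}$ is where the real bookkeeping care is required.
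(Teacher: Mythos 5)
Your proposal is correct and follows essentially the same route as the paper: the first bound via splitting the integral over the Mahalanobis ball $B_{\alpha_{i}}(\boldsymbol{\mu}_{\out})$ and using the triangle inequality inside plus the trivial bound $ES_{i}\le 1$ outside, and the second bound via boundedness of the score, convexity of total variation over the Gaussian mixture, and Pinsker combined with the closed-form Gaussian KL. The only (inessential) difference is that you apply the TV domination componentwise using $ES_{i}\in[0,1]$, whereas the paper applies it once to $ES\in[0,k]$; both yield the identical bound $\sum_{i}\alpha_{i}$.
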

We emphasize that in Theorem \ref{paper:main_thm} $\boldsymbol{\mu}_{i}$ and $\boldsymbol{\mu}_{\out}$ can have {\it arbitrary configurations}. We refer the reader to the Appendix for the proof of Theorem \ref{paper:main_thm} and detailed discussions on other variants. 

\paragraph{Performance with respect to the distance between ID and OOD data}
The next corollary explains how Theorem \ref{paper:main_thm} can quantify that the performance of GEM-based OOD detector increases as the distance between ID and OOD data increases. 
\begin{cor}
\label{main-cor-1}
For $1\le i\le k$, set $\alpha=d_{M}(\boldsymbol{\mu}_{out},\boldsymbol{\mu}_{i})$. We have the following from the first bound in Theorem \ref{paper:main_thm},
\begin{align*}
	& \mathbb{E}_{\*x \sim P^{\out}_{\mathcal{X}}}	(ES(\*x))\le k\Big{(}\big{(}1-P_{\mathcal{X}}^{\out}(B_{\alpha}(\boldsymbol{\mu}_{\out}))\big{)}+\exp(-\frac{1}{2}\alpha^{2})\Big{)}.
\end{align*}
Now as $\alpha \to \infty$ the right hand side in the above approaches to $0$. This indicates that the performance of our method improves as $\alpha \to \infty$. On the other hand, using the second bound in the Theorem \ref{paper:main_thm}, we have,
\begin{align*}
	 \mathbb{E}_{x \sim P^{\innn}_{\mathcal{X}}}	(ES(\*x))-\mathbb{E}_{\*x \sim P^{\out}_{\mathcal{X}}}	(ES(\*x))\le k\alpha,
\end{align*}
and it follows that as $\alpha \to 0$ the energy difference between in-distribution and out-of-distribution data converges to $0$. In other words, the performance  decreases as $\alpha$ approaches to $0$. We will further justify our theory in simulation study (next subsection). 
 \end{cor}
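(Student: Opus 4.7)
The plan is to read the corollary off directly from Theorem \ref{paper:main_thm} by specializing to the configuration in which all the Mahalanobis distances $d_{M}(\boldsymbol{\mu}_{out},\boldsymbol{\mu}_{i})$ take a common value $\alpha$, and then extract the behavior in the two regimes $\alpha \to \infty$ and $\alpha \to 0$. Since Theorem \ref{paper:main_thm} already does the heavy lifting, the corollary amounts to a bookkeeping plus limiting exercise with no new ideas about $ES(\*x)$ itself.

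For the first displayed inequality, I would plug the common value into the first bound of Theorem \ref{paper:main_thm}. Because the $i$-th summand is then independent of $i$, the sum collapses to $k$ copies of a single term of the form $\bigl(1-P_{\mathcal{X}}^{\out}(B_{\alpha}(\boldsymbol{\mu}_{\out}))\bigr)+\exp(-\tfrac{1}{2}\alpha^{2})$, which is exactly what the corollary asserts. To pass to the limit $\alpha \to \infty$, I would invoke Gaussian concentration: under $\*x \sim \mathcal{N}(\boldsymbol{\mu}_{\out},\Sigma)$, the quadratic form $(\*x-\boldsymbol{\mu}_{\out})^{\top}\Sigma^{-1}(\*x-\boldsymbol{\mu}_{\out})$ is a $\chi^{2}_{d}$ random variable, so $P_{\mathcal{X}}^{\out}(B_{\alpha}(\boldsymbol{\mu}_{\out})) = \Pr(\chi^{2}_{d}<\alpha^{2})\to 1$. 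Combined with $\exp(-\tfrac{1}{2}\alpha^{2})\to 0$, this proves that the upper bound, and hence $\mathbb{E}_{\*x \sim P^{\out}_{\mathcal{X}}}(ES(\*x))$, tends to $0$.

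For the second displayed inequality, the same specialization turns $\sum_{i=1}^{k}\alpha_{i}$ in Theorem \ref{paper:main_thm} into $k\alpha$, reproducing the second line of the corollary. Since the right-hand side is linear in $\alpha$, letting $\alpha\to 0$ trivially forces the difference $\mathbb{E}_{\*x \sim P^{\innn}_{\mathcal{X}}}(ES(\*x))-\mathbb{E}_{\*x \sim P^{\out}_{\mathcal{X}}}(ES(\*x))$ to collapse, matching the intuition that OOD detection becomes infeasible when the in- and out-of-distribution means coincide.

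The only thing that requires real care is reconciling the factor $\tfrac{1}{2}$ in the definition $\alpha_{i}:=\tfrac{1}{2}d_{M}(\boldsymbol{\mu}_{i},\boldsymbol{\mu}_{\out})$ used in Theorem \ref{paper:main_thm} with the corollary's convention $\alpha=d_{M}(\boldsymbol{\mu}_{\out},\boldsymbol{\mu}_{i})$, so that the rate in the exponent comes out as $-\tfrac{1}{2}\alpha^{2}$ and not $-\tfrac{1}{8}\alpha^{2}$; this is purely bookkeeping. Beyond that, the substantive content is all inherited from Theorem \ref{paper:main_thm}, and there is no genuine obstacle.
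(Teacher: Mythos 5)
Your route is the same one the paper takes: the corollary is given there with no separate proof, precisely as the specialization of Theorem \ref{paper:main_thm} to the case where all the distances $d_{M}(\boldsymbol{\mu}_{\out},\boldsymbol{\mu}_{i})$ coincide, the sum collapsing to $k$ identical terms and the two limiting statements being read off; your $\chi^{2}_{d}$ argument that $P_{\mathcal{X}}^{\out}(B_{\alpha}(\boldsymbol{\mu}_{\out}))\to 1$ makes explicit a step the paper leaves implicit, and the second bound is unproblematic since $\sum_{i}\alpha_{i}$ is at most $k\alpha$ under either convention.

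The one point you flag, however, is not resolvable in the direction you assert. Under the corollary's stated convention $\alpha=d_{M}(\boldsymbol{\mu}_{\out},\boldsymbol{\mu}_{i})$, the honest substitution into the first bound of Theorem \ref{paper:main_thm} uses $\alpha_{i}=\alpha/2$, hence gives ball radius $\alpha/2$ and exponent $-\alpha^{2}/8$, not radius $\alpha$ and exponent $-\alpha^{2}/2$; the displayed inequality with $B_{\alpha}$ and $\exp(-\alpha^{2}/2)$ is strictly stronger than what the theorem delivers and in fact fails in general: for $k=1$, $d=1$, $\Sigma=1$ one computes $\mathbb{E}_{\mathbf{x}\sim P^{\out}_{\mathcal{X}}}(ES(\mathbf{x}))=e^{-\alpha^{2}/4}/\sqrt{2}$, which exceeds $2\Phi(-\alpha)+e^{-\alpha^{2}/2}$ once $\alpha$ is moderately large. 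So the ``bookkeeping'' only closes if you read $\alpha$ as the common value of the theorem's $\alpha_{i}$, i.e.\ half the Mahalanobis distance between $\boldsymbol{\mu}_{\out}$ and $\boldsymbol{\mu}_{i}$ (or, equivalently, restate the bound with $B_{\alpha/2}$ and $\exp(-\alpha^{2}/8)$). This mismatch is inherited from the paper's own statement rather than introduced by you, and under the corrected reading your argument is complete; the qualitative conclusions as $\alpha\to\infty$ and $\alpha\to 0$ are unaffected either way, since only the constants in the exponent and radius change.
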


% \vspace{-0.3cm}
\paragraph{Performance in high dimensions}
We now show that the performance of GEM decreases as dimension of feature space increases. This is due to {\it curse of dimensionality} which we next explain. First, for simplicity assume that $\boldsymbol{\mu}_{out}=0$ and for all $1\le i\le k$, $\alpha=d_{M}(\boldsymbol{\mu}_{out},\boldsymbol{\mu}_{i})$. Consider a multi-dimensional gaussian $\mathcal{N}(0,\mathbf{I}_{d})$. As $d$ increases the high-probability region under this gaussian distribution will concentrate away from the origin. More precisely, 
\begin{align*}
\*x \sim \mathcal{N}(0,\mathbf{I}_{d}) \implies   \Norm{\*x}_{2}^{2} \sim \chi^{2}_{d} \implies \mathbb{E}(\Norm{\*x}_{2}^{2})=d.
\end{align*}
Therefore, the out-of-distribution samples will have a larger distance (on average) to the origin as dimension increases and it follows that the OOD detector may misclassify these OOD samples as in-distribution.

We next conduct several simulation studies to systematically verify our provable guarantees. 
\subsection{Simulation Studies and Further Analysis}
\label{sec:gmm-simulation}

What properties of the data representation make OOD uncertainty challenging? In this subsection, we construct a synthetic data representation that allows us to flexibly modulate different properties of the data representation including:
\begin{enumerate}[label=(\roman*)]
\item distance between ID and OOD data, 
\item feature or input dimension,
\item number of classes.
\end{enumerate}
 We simulate and probe how these factors affect OOD uncertainty estimation. The simulation also serves as a verification of our theoretical guarantees.
\paragraph{Feature representation setup} The in-distribution representation on the feature space (or input space) comprises a mixture of $k$ class-conditional Gaussian. To replicate common empirical benchmarks such as CIFAR-10 and CIFAR-100~\citep{krizhevsky2009learning}, we explore both $k=10$ and $k=100$ by default. Unless otherwise specified, we set the feature (or input) dimension $d=512$. We fix the total number of in-distribution samples $N=20,000$. The tied covariance matrix is diagonal with magnitude $\sigma^2$, i.e., $\Sigma=\sigma^2\mathbf{I}_d$.

We assume the data in the feature space (or input space) $\*x \in \mathbb{R}^d$ is sampled from the following class-conditional Gaussian,
\begin{align*}
    \*x_\text{in}~|~y_i \sim \mathcal{N}(\boldsymbol{\mu}_i, \sigma^2 \mathbf{I}_d),
\end{align*}
where $\boldsymbol{\mu}_i$ is the mean for in-distribution classes $i\in \{1,2,...,k\}$. We consider different configurations of $\boldsymbol{\mu}_{i}$, $1\le i \le k$ representing means of each $k$ in-distribution classes.
%Without loss of generality, we assume the full dimension is divided into $k$ subspaces, representing $k$ in-distribution classes. 
Specifically, the mean $\boldsymbol{\mu}_i$ corresponding to  $i$-th class is a unit vector $\boldsymbol{v}_i$, scaled by a distance parameter $r>0$.  In particular, $\boldsymbol{\mu}_{i}=r\cdot \boldsymbol{\nu}_{i}$, where $\Norm{\boldsymbol{\nu}_i}_2=1$. $\boldsymbol{\nu}_i$ is a sparse vector with $s=\lfloor d/k\rfloor$  non-zero entries, with equal values in the position from $(i-1)\cdot s$ up to $i\cdot s $ and $0$ elsewhere. It follows that for $i,j \in \{1,...,k\}$ and $i \neq j$, 

\begin{equation}
    \begin{aligned}
        &\inn{\boldsymbol{\nu}_{i},\boldsymbol{\nu}_{j}}=0,\\
    &\Norm{\boldsymbol{\nu}_{i}-\boldsymbol{\nu}_{j}}_{2}=\sqrt{2}.
\label{equation:sim-properties}
\end{aligned}
\end{equation}
%  We also consider the magnitude parameter $r>0$ and we let $\mu_{i}=r\cdot \nu_{i}$ to be the mean of $i$-th class. 
Furthermore, we assume that the out-of-distribution data representation is centered at the origin, with $\boldsymbol{\mu_\text{out}}=\mathbf{0} \in \mathbb{R}^d$,
\begin{align*}
    \*x_\text{out} \sim \mathcal{N}(\mathbf{0}, \sigma^2 \mathbf{I}_d).
\end{align*}
Note that the above configuration is considered for simplicity and similar simulation results holds when we translate $\boldsymbol{\mu}_{i}$ and $\boldsymbol{\mu}_{\out}$ with a constant vector or by applying an orthogonal transformation.

\paragraph{Rationale of the synthetic data} Compared to estimating GEM scores from real datasets using parameterized models (such as neural networks), 
these synthetic simulations offer two key simplifications. 
First, viewing the setting on feature space, we can \emph{flexibly modulate} key properties of datasets such as the number of classes and distance between induced ID and OOD representation in the feature space. In contrast, in real datasets, these properties are usually predetermined. Second, viewing the setting on input space, the function mapping $f(\*x)$ is completely deterministic and optimal, provided with known parameters $\{\boldsymbol{\mu}_1, \boldsymbol{\mu}_2,...,\boldsymbol{\mu}_k\}$ and $\Sigma$. This allows us to isolate the effect of data distribution from model optimization. In contrast, estimating $f(\*x)$ using complex models such as neural networks might have inductive bias, and depend on the optimization algorithm chosen.

\paragraph{Performance  with respect to the number of classes}
We now show that the performance of our method decreases as the number of classes increases. To explain this, we compute $D$ in terms of $k$ to see how they are related. First, we need the following definition,
\begin{deff}
Let $\boldsymbol{\mu} ,\boldsymbol{\nu} \in \R^{d}$ with $\gamma=\Norm{\boldsymbol{\mu}-\boldsymbol{\nu}}_{2}$. Let $P \sim \mathcal{N}(\boldsymbol{\mu},\mathbf{I}_{d})$. Define,
\begin{align*}
    A_{\gamma}:=\mathbb{E}_{\*x \sim P}	(\exp(-\frac{1}{2}\Norm{\*x-\boldsymbol{\nu}}_{2}^{2})).
\end{align*}
\end{deff}
\begin{rmk}
\label{rotation}
Notice that, since standard Gaussian distribution is rotationally invariant, $A_{\gamma}$ only depends on the distance between $\boldsymbol{\mu}$ and $\boldsymbol{\nu}$ (i.e. $\gamma$). Also it is easy to see that $A_{\gamma}$ decreases as $\gamma$ increases.
\end{rmk}
% Next, we state the following proposition,
\begin{prop}
\label{num-class}
We have the following,
\begin{align*}
 D=A_{0}-A_{r}+(k-1)(A_{\sqrt{2}\cdot r}-A_{r}).
\end{align*}
\end{prop}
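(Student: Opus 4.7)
The plan is to simply unpack the definition of $D$ and use the orthonormality properties of the means together with the rotational invariance of $A_\gamma$ (Remark~\ref{rotation}) to collapse every Gaussian integral into the right $A_\gamma$ term. First I would specialize to $\Sigma=\mathbf{I}_d$ (which matches the setting under which $A_\gamma$ is defined) so that $ES(\*x)=\sum_{i=1}^{k}\exp(-\tfrac{1}{2}\|\*x-\boldsymbol{\mu}_i\|_2^2)$. Linearity of expectation then lets me treat the $k$ summands in $ES(\*x)$ separately.

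Next, for the OOD term I would note that under $P_{\mathcal{X}}^{\out}=\mathcal{N}(\mathbf{0},\mathbf{I}_d)$, each summand $\mathbb{E}[\exp(-\tfrac{1}{2}\|\*x-\boldsymbol{\mu}_i\|_2^2)]$ is, by definition of $A_\gamma$, equal to $A_{\|\boldsymbol{\mu}_i\|_2}=A_r$, since $\boldsymbol{\mu}_i=r\boldsymbol{\nu}_i$ with $\|\boldsymbol{\nu}_i\|_2=1$. Summing over $i$ gives $\mathbb{E}_{\*x \sim P^{\out}_{\mathcal{X}}}(ES(\*x))=kA_r$.

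For the ID term, I would use the mixture structure of $P^{\innn}_{\mathcal{X}}$ with equal priors $1/k$ to write
\begin{align*}
\mathbb{E}_{\*x \sim P^{\innn}_{\mathcal{X}}}(ES(\*x))=\frac{1}{k}\sum_{j=1}^{k}\sum_{i=1}^{k}\mathbb{E}_{\*x \sim \mathcal{N}(\boldsymbol{\mu}_j,\mathbf{I}_d)}\!\left[\exp(-\tfrac{1}{2}\|\*x-\boldsymbol{\mu}_i\|_2^2)\right].
\end{align*}
By Remark~\ref{rotation} each inner expectation depends only on $\|\boldsymbol{\mu}_i-\boldsymbol{\mu}_j\|_2$. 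The diagonal terms ($i=j$) give $A_0$, and for $i\neq j$ the property $\|\boldsymbol{\nu}_i-\boldsymbol{\nu}_j\|_2=\sqrt{2}$ from Equation~\ref{equation:sim-properties} yields $\|\boldsymbol{\mu}_i-\boldsymbol{\mu}_j\|_2=r\sqrt{2}$, hence $A_{\sqrt{2}\cdot r}$. Each inner sum therefore equals $A_0+(k-1)A_{\sqrt{2}\cdot r}$, independent of $j$, so the outer average over $j$ leaves $A_0+(k-1)A_{\sqrt{2}\cdot r}$.

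Finally I would subtract: $D=A_0+(k-1)A_{\sqrt{2}\cdot r}-kA_r=A_0-A_r+(k-1)(A_{\sqrt{2}\cdot r}-A_r)$, which is the claim. There is no real obstacle here; the only thing one has to be careful about is invoking Remark~\ref{rotation} correctly so that the integral depends only on the distance between the shifting center $\boldsymbol{\mu}_i$ and the Gaussian mean $\boldsymbol{\mu}_j$, and making sure the orthogonality/normalization from Equation~\ref{equation:sim-properties} is used to identify the two distinct distance regimes ($0$ and $r\sqrt{2}$).
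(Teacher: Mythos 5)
Your proposal is correct and follows essentially the same route as the paper's proof: linearity of expectation and the equal-prior mixture structure give the double sum $\frac{1}{k}\sum_{j}\sum_{i}\mathbb{E}_{\*x\sim P_j}(ES_i(\*x))$, the pairwise distances $0$, $\sqrt{2}\cdot r$, and $r$ identify each term as $A_0$, $A_{\sqrt{2}\cdot r}$, or $A_r$, and subtraction yields the claimed identity. No gaps to report.
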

We refer the reader to the Appendix for the proof of Proposition \ref{num-class}. The next Corollary explains how the performance of our method decreases by increasing the number of classes.
\begin{cor}
Since $\sqrt{2}\cdot r>r$, it follows from Remark \ref{rotation} that $A_{\sqrt{2}\cdot r} <A_{r}$. This means that the last term in the following is negative,
\begin{align*}
   D=A_{0}-A_{r}+(k-1)(A_{\sqrt{2}\cdot r}-A_{r}).
\end{align*}
In other words, as $k$ increases $D$ becomes smaller which indicates that the performance of the GEM method decreases.% as we observed in Figure \ref{fig:simulation} (right).
\end{cor}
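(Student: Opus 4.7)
The plan is to treat this corollary as a short arithmetic consequence of Proposition~\ref{num-class} combined with the monotonicity statement in Remark~\ref{rotation}, so the proof is essentially a sign analysis rather than a new calculation. First I would invoke Proposition~\ref{num-class} directly to write $D = A_0 - A_r + (k-1)(A_{\sqrt{2}\cdot r} - A_r)$, which cleanly separates the $k$-dependent term from the rest. Note that the first two terms $A_0 - A_r$ do not depend on $k$ and can be regarded as a constant in the forthcoming monotonicity argument.

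Second, I would justify the sign of the bracketed factor $A_{\sqrt{2}\cdot r} - A_r$. By Remark~\ref{rotation}, the quantity $A_\gamma$ is a strictly decreasing function of $\gamma \ge 0$ (since it is the expectation of the strictly decreasing function $\*x \mapsto \exp(-\tfrac{1}{2}\Norm{\*x - \boldsymbol{\nu}}_2^2)$ under a Gaussian whose center moves away from $\boldsymbol{\nu}$ as $\gamma$ grows, and the expectation depends only on $\gamma$ by rotational invariance). Since $r > 0$ gives $\sqrt{2}\cdot r > r$, monotonicity yields $A_{\sqrt{2}\cdot r} < A_r$, so $A_{\sqrt{2}\cdot r} - A_r < 0$.

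Third, I would combine these observations: differentiating the expression with respect to $k$ (or simply comparing $D(k)$ with $D(k+1)$) gives $D(k+1) - D(k) = A_{\sqrt{2}\cdot r} - A_r < 0$, showing that $D$ is strictly decreasing in $k$. Equivalently, as $k \to \infty$ the term $(k-1)(A_{\sqrt{2}\cdot r} - A_r) \to -\infty$, so $D \to -\infty$. By the interpretation of $D$ established in Section~\ref{sec:bound} as a measure of the GEM score gap between in- and out-of-distribution data, a smaller $D$ translates into weaker separation, hence degraded detection performance.

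The main obstacle is essentially cosmetic: there is no hard analytic step, only the careful invocation of the correct earlier results and a clean sign tracking. One subtle point worth stating explicitly is that the monotonicity in Remark~\ref{rotation} is already granted, so the argument does not require reopening the integral defining $A_\gamma$; if one wished, strict monotonicity could be confirmed in a one-line aside by observing that $\exp(-\tfrac{1}{2}\Norm{\*x-\boldsymbol{\nu}}_2^2)$ has a strictly positive density-weighted derivative in $\gamma$, but this is not needed for the corollary as stated.
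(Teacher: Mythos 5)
Your proposal is correct and follows the paper's own argument essentially verbatim: both invoke Proposition~\ref{num-class} for the decomposition of $D$, use the monotonicity of $A_\gamma$ from Remark~\ref{rotation} to conclude $A_{\sqrt{2}\cdot r} < A_r$, and read off that the $(k-1)$-weighted term makes $D$ decreasing in $k$. The extra observations you add (the explicit difference $D(k+1)-D(k)$ and the divergence $D \to -\infty$) are harmless refinements of the same sign analysis.
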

\subsection{Simulation Results}
In this subsection, we report simulation results that confirm our theoretical guarantees presented above. 
\paragraph{Effect of distance between ID and OOD} Figure~\ref{fig:simulation} (left) shows how the False Positive Rate (at 95\% TPR) changes with the distance between ID and OOD features. The $\sigma$ is set to be $1$ and the distance is modulated by adjusting the magnitude parameter $r$, where a larger $r$ results in a larger distance. For both $k=10$ and $k=100$, the FPR decreases as the distance increases, which matches our intuition that more drastic distribution shifts are easier to be detected. Under the same distance, we observe a relatively higher FPR for data with more classes ($k=100$). The performance gap diminishes as the distance becomes very large. 

\begin{figure}[t]
\begin{center}

\includegraphics[width=51mm]{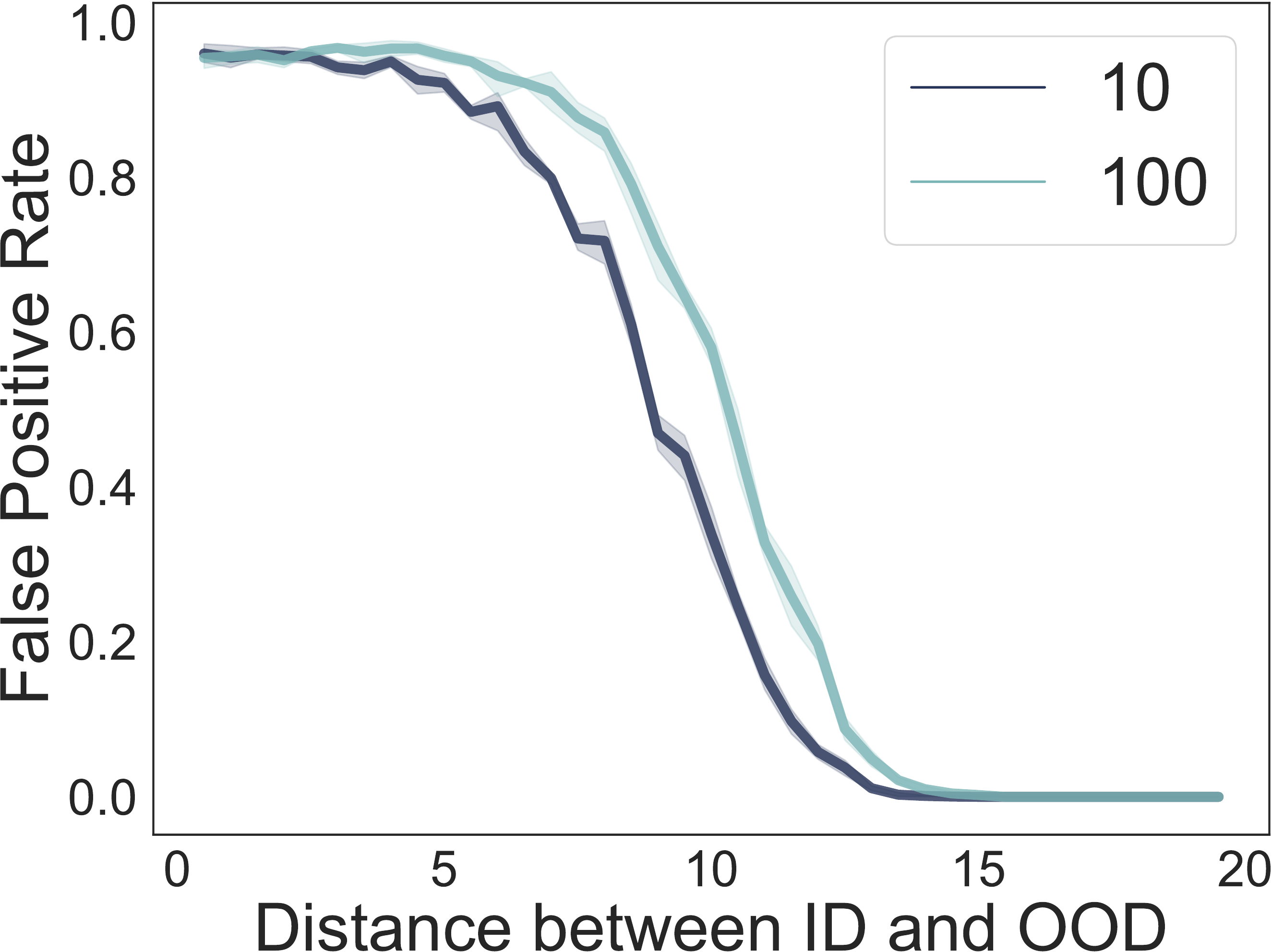}
\includegraphics[width=51mm]{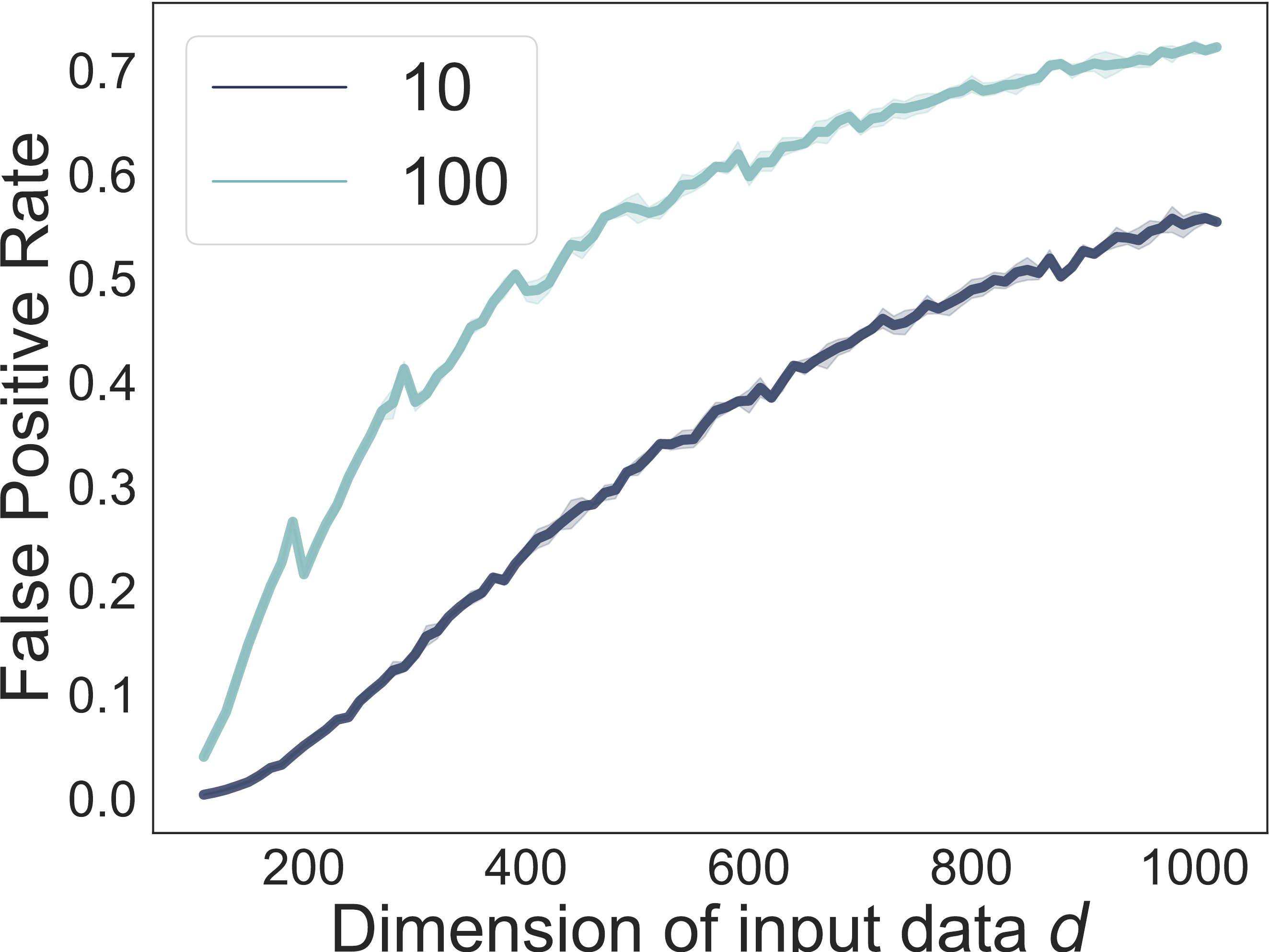}
\includegraphics[width=51mm]{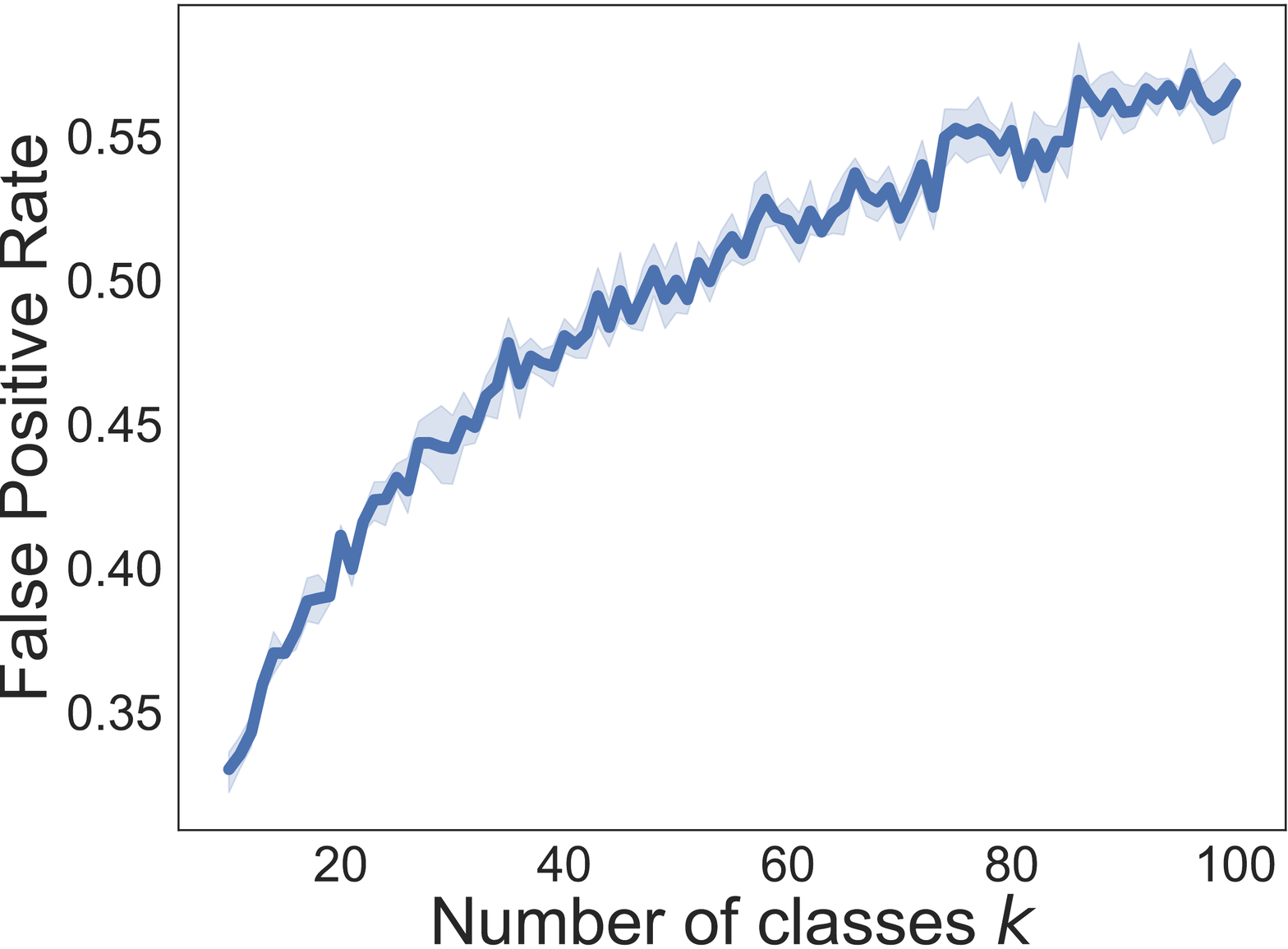}
\caption{\small 
distance between ID and OOD (\textbf{left}), the dimension of input data $d$ (\textbf{middle}), and number of classes (\textbf{right}). Each curve is averaged over 5 different runs (shade indicates the variance). A lower value on the y-axis is better. The covariance matrix $\Sigma$ is set to be identity in all of these simulations.
}
\label{fig:simulation} 
% \vspace{-0.6cm}
\end{center}
\end{figure}

\paragraph{Higher dimension exacerbates OOD uncertainty} Figure~\ref{fig:simulation} (middle) shows how the FPR changes as we increase the input dimension from $d=100$ to $d=1,000$ while keeping the distance fixed with $r=10$ and $\sigma=1$. As the dimension $d$ increases, the number of non-zero entries in each $\boldsymbol{\mu}_{i}$ increases accordingly (i.e. $\boldsymbol{\mu}_{i}$ becomes less sparse). Under the same feature dimension, we observe a higher FPR for $k=100$ than $k=10$, which corroborates the empirical observations on CIFAR-10 and CIFAR-100 (Section~\ref{sec:experiment-nn}). This suggests
that higher dimensions can be a key factor inducing the detrimental effect in OOD detection. 

% \vspace{-0.3cm}
\paragraph{Effect of the number of classes} 
Lastly, we investigate the performance of OOD uncertainty estimation by linearly increasing the number of classes $k$ from 10 to 100. We keep the magnitude parameter fixed with $r=10$ and dimension $d=512$ and $\sigma=1$. We see as the number of classes increases, the performance of our method decreases. 
We close this section by noting that we also provided formal mathematical justifications in the previous subsection.

% \vspace{-0.1cm}
\section{Related Work}
\label{sec:related}
Detecting unknowns has a long history in machine learning. We review works that are studied this problem in the context of deep neural networks. See  \citep{yang2021generalized} for a survey on generalized OOD detection (an umbrella term that includes closely related domains such as anomaly detection, novelty detection, open-set recognition, and OOD detection). 

\paragraph{Out-of-distribution detection for discriminative models}
In \citep{bendale2015towards}, the OpenMax  score is developed for OOD detection based on the extreme
value theory (EVT). Subsequent work by \citeauthor{hendrycks2016baseline} proposed a simple baseline using maximum softmax probability. The MSP score for OOD input is proven to be arbitrarily high for neural networks with ReLU activation~ \citep{hein2019relu}.  \citeauthor{liang2018enhancing} improved MSP by proposing the ODIN score, which amplifies the ID and OOD separability. It is shown that a sufficiently large temperature has a strong smoothing effect that transforms the softmax score back to the logit space---which more effectively distinguishes between ID vs. OOD. In \citep{lee2018simple}, a score is constructed based on the maximum Mahalanobis distance to the class means in the feature space of the pre-trained network. \citeauthor{liu2020energy}, proposed using the energy score, which can be derived directly from the logit output of the pre-trained network. In \citep{huang2021mos}, OOD detection is studied when the label space is large. It is shown that grouping the labels for in-distribution data can be effective in OOD detection for large semantic space. In \citep{ming2021impact}, the effect of spurious correlation is studied for OOD detection.  \citeauthor{huang2021importance} derived a scoring function termed GradNorm from the {gradient space}. GradNorm employs the vector norm of gradients, backpropagated from the KL divergence between the softmax output and a uniform probability distribution. In \citep{wang2021can}, the OOD detection is studied for multi-label classification where each data instance has multiple labels. In this work, we develop an analytical framework to analyze the performance of OOD scoring functions and show the superiority of GEM both theoretically and empirically. 

\vspace{-0.1cm}
\paragraph{Out-of-distribution detection via generative modeling}
There are several works that attempt modeling OOD data using generative modeling (e.g. GANs). \citeauthor{lee2017training} use GANs to generate data with low density for model regularization. \citeauthor{vernekar2019out} model in-distribution as a low dimensional submanifold of input space and uses auto-encoders to generate OOD samples outside of the in-distribution domain. \citeauthor{sricharan2018building} use GANs to generate OOD samples that the initial classifier is confident about and use those to create a more robust OOD detector.  Prior research also used generative modeling to estimate the density of the in-distribution data, and classify a sample as OOD if the estimated likelihood is low. 
However, it is shown in \citep{nalisnick2018deep} that deep generative models can produce a higher likelihood for OOD data. For example, it fails to distinguish CIFAR10 samples from SVHN. In \citep{ren2019likelihood} and \citep{serra2019input}, this problem is addressed by considering a likelihood ratio and taking the input complexity into account.

\paragraph{Out-of-distribution detection by model regularization}
Several works address the out-of-distribution detection problem during training-time regularization ~\citep{lee2017training, bevandic2018discriminative, geifman2019selectivenet,malinin2018predictive,mohseni2020self,jeong2020ood, chen2020informative}. In \citep{lee2017training}, a new term is added to the loss function of the neural net to force the out-of-distribution sample to have uniform prediction values across labels. A similar loss is followed by outlier exposure~\citep{hendrycks2018deep}. In \citep{liu2020energy}, a term is added to the loss function of the network to force out-distribution samples to have higher energy values after training. In \citep{chen2020informative}, an informative outlier mining procedure is proposed, which adaptively samples from auxiliary OOD data that is near the decision boundary between ID and OOD.
 
Such methods typically require having access to auxiliary unlabeled data. We focus on post hoc OOD detection methods, which have the advantages of being easy to use and general applicability. This is convenient for the adoption of OOD detection methods in real-world production environments, where the overhead cost of retraining or modifying the model can be prohibitive.

\paragraph{Uncertainty estimation in deep neural networks}
A Bayesian model is a statistical model that implements Bayes' rule to infer uncertainty within the model~\citep{jaynes1986bayesian}.
Recent works attempt several approximations of Bayesian inference including MC-dropout~\citep{mcdropout16icml} and deep ensembles~\citep{deepensemble00iwmcs,deepensemble17nips}. These methods address model uncertainty (\emph{i.e.}, epistemic) and are less competitive for OOD uncertainty estimation.  \citeauthor{kendall2017uncertainties} developed an extended framework to study aleatoric and epistemic uncertainty together. In \citep{van2020uncertainty} an uncertainty estimation method is developed using the RBF network. Dirichlet Prior Network (DPN) is also used for OOD detection with an uncertainty modeling of three different sources of uncertainty: model uncertainty, data uncertainty, and distributional uncertainty and form a line of works~\citep{dpn18nips,dpn19nips,dpn20nips}.

% \vspace{-0.1cm}
\section{Conclusion}
\label{sec:conclusion}
In this work, we develop an analytical framework that precisely characterizes and unifies the theoretical understanding of out-of-distribution detection. Our analytical framework motivates a novel OOD detection method for neural networks, \emph{GEM}, which demonstrates both theoretical and empirical superiority. We formally provide provable guarantees and comprehensive analysis of our method, underpinning how various properties of data distribution affect the performance of OOD detection. We hope our work can motivate future research on the theoretical understandings of OOD detection.

\section*{Acknowledgment}
The authors would like to thank Yifei Ming for the helpful discussion. This work is supported by the Wisconsin Alumni Research Foundation (WARF). YL is also supported by the Facebook Research Award, JP Morgan faculty award, and funding from Google Brain and American Family Insurance.

\bibliographystyle{plainnat}

\bibliography{AAAI.bib}
% Use \bibliography{yourbibfile} instead or the References section will not appear in your paper
%\nobibliography{aaai22}
\appendix

 \begin{center}
\textbf{\large Supplemental Materials: Provable Guarantees for Understanding Out-of-distribution Detection
 }
\end{center}
 
%%%%%%%%%%%%%%%%%%%%%%%%%%%%%%%%
% \section{Appendix}
%%%%%%%%%%%%%%%%%%%%%%%%%%%%%%%%
The main goal of the appendix is to give detailed proofs for the results presented in the main paper, and also provide details for the experimental results. The outline of the appendix would be as follows. In Section \ref{app:background}, we recall the basic setting that we considered in the main paper. Next, we recall some notions and tools that we will need to present proofs. In Section \ref{app:energy-bound}, we give detailed proof for Theorem \ref{main_thm} (Theorem \ref{paper:main_thm} in the main paper). In Section \ref{app:other-proof}, we prove Proposition \ref{num-class-app} (Proposition \ref{num-class} in the main paper). %In Section \ref{app:multi-setting}, we discuss some extensions to the multi-modal setting. 
Finally, in Section \ref{app:experiment-details}, we present experiment details on individual OOD data-sets.
%%%%%%%%%%%%%%%%%%%%%%%%%%%%%%%%%%%%
\section{Background}
\label{app:background}
%%%%%%%%%%%%%%%%%%%%%%%%%%%%%%%%%%%
We start by briefly recalling the setting that we considered in the main paper. Let  $\mathcal{X}=\R^{d}$ be the input or feature space. Throughout the following, we work with a tied covariance matrix $\Sigma$. Let $P_{\mathcal{X}}^{\out}$ denotes the probability distribution on $\mathcal{X}$ with the following probability density,
\begin{align*}
	p_{\mathcal{X}}^{\out}(\mathbf{\*x})=\frac{1}{\sqrt{(2\pi)^{d}\abs{\Sigma}}}\exp(-\frac{1}{2}(\mathbf{x} -\boldsymbol{\mu}_{\out})^{\top}\Sigma^{-1}(\mathbf{x} -\boldsymbol{\mu}_{\out})).
\end{align*}
Next, let $P_{\mathcal{X}}^{\innn}$ be the probability distribution on $\mathcal{X}$ with the following probability density,
\begin{align*}
	p^{\innn}_{\mathcal{X}}(\mathbf{x})=\frac{1}{k}\sum_{i=1}^{k}p_{i}(\mathbf{x}),
\end{align*}
where,
\begin{align*}
	p_{i}(\*x)=\frac{1}{\sqrt{(2\pi)^{d}\abs{\Sigma}}}\exp(-\frac{1}{2}(\*x-\boldsymbol{\mu}_{i})^{\top}\Sigma^{-1}(\*x-\boldsymbol{\mu}_{i})).
\end{align*}
Next, recall that the re-scaled GEM function is defined by,
\begin{align*}
	ES(\*x)=\sum_{i=1}^{k}ES_{i}(\*x),
\end{align*}
where,
\begin{align*}
	ES_{i}(\*x)=\exp(-\frac{1}{2}(\*x-\boldsymbol{\mu}_{i})^{\top}\Sigma^{-1}(\*x-\boldsymbol{\mu}_{i})).
\end{align*}
Recall that Theorem \ref{main_thm} (Theorem \ref{paper:main_thm} in the main body) bounds,
\begin{align}
\label{ast}
D:=\mathbb{E}_{\*x \sim P^{\innn}_{\mathcal{X}}}	(ES(\*x))-\mathbb{E}_{\*x \sim P^{\out}_{\mathcal{X}}}	(ES(\*x)), \tag{$\ast$}
\end{align}
in terms of Mahalanobis distance between $\boldsymbol{\mu}_{\out}$ and $\boldsymbol{\mu}_{i}$. 
We start by setting notations and recalling definitions and tools that we need to prove Theorem \ref{main_thm} (Theorem \ref{paper:main_thm} in the main body).
%Next, recall definition of Mahalanobis distance (Definition \ref{def:mahal} in the main paper),
%\begin{deff}[Mahalanobis distance~\citep{mahalanobis1936generalized}]
% For $\*u,\*v \in \R^{d}$, the Mahalanobis distance, with respect to $\Sigma$, is defined by,
%\begin{align*}
%d_{M}(\*u,\*v):=\sqrt{(\*u-\*v)^{\top}\Sigma^{-1}(\*u-\*v)},
%\end{align*}
%and for $r>0$ the the open ball with center $\*u$ and radius $r$ is defined by,
%\begin{align*}
%B_{r}(\*u):=\{\*x \in \R^{d}|d_{M}(\*x,\*u)< r\}.
%\end{align*}
%\end{deff}
Next, for the input or feature space $\mathcal{X}$, let $\B$ denotes the Borel $\sigma$-algebra on $\mathcal{X}$ and $P(\mathcal{X})$ denotes the set of all probability measures on $(\mathcal{X},\B)$. We recall the following definitions,
\begin{deff}[Total variation]
\label{TV}
Let $P_{1},P_{2} \in P(\mathcal{X})$. The total variation(TV) is defined by,
\begin{align*}
\delta(P_{1},P_{2})=\sup _{A\in \B}\abs{P_{1}(A)-P_{2}(A)}.
\end{align*}
\end{deff}
We use the following characterization of total variation (See for example \citep{TOT} Theorem 5.4), 
\begin{lem}
\label{total_char}
Let $P_{1},P_{2}\in P(\mathcal{X})$ and let $\mathcal{F}$ the unit ball in  $L^{\infty}(\mathcal{X})$,
\begin{align*}
\mathcal{F}:=\{f\in L^{\infty}(\mathcal{X})|\Norm{f}_{\infty}\le 1 \},
\end{align*}
then we have the following characterization for the total variation distance,
\begin{align*}
\delta(P_{1},P_{2})=\sup_{f \in \mathcal{F}	}\abs{\mathbb{E}_{\*x \sim P_{1}}f(\*x)-\mathbb{E}_{\*x \sim P_{2}}f(\*x)}.
\end{align*}
\end{lem}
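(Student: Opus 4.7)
The plan is to prove the equality by establishing both directions separately, which is the classical duality representation of total variation. For the inequality $\delta(P_1,P_2) \le \sup_{f \in \mathcal{F}} |\mathbb{E}_{P_1}f - \mathbb{E}_{P_2}f|$, I would simply observe that for every Borel set $A \in \mathcal{B}$ the indicator function $\mathbf{1}_A$ lies in $\mathcal{F}$ since $\|\mathbf{1}_A\|_\infty \le 1$, and $\mathbb{E}_{P_1}\mathbf{1}_A - \mathbb{E}_{P_2}\mathbf{1}_A = P_1(A) - P_2(A)$. Taking absolute values and then the supremum over $A \in \mathcal{B}$ yields this direction immediately.

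For the reverse inequality, I would invoke the Hahn--Jordan decomposition of the signed measure $\mu := P_1 - P_2$. This produces a Borel partition $\mathcal{X} = A^+ \sqcup A^-$ together with finite positive measures $\mu^+$ and $\mu^-$, concentrated on $A^+$ and $A^-$ respectively, with $\mu = \mu^+ - \mu^-$. Since both $P_1$ and $P_2$ are probability measures, $\mu(\mathcal{X}) = 0$, hence $\mu^+(\mathcal{X}) = \mu^-(\mathcal{X})$. A routine check then identifies this common value with $\delta(P_1,P_2)$: on $A^+$ one accumulates all the positive mass of $\mu$ and on $A^-$ all the negative mass, so the supremum in the definition of $\delta$ is realized at $A = A^+$ (equivalently $A = A^-$). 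Given this identification, for any $f \in \mathcal{F}$ I would split $\int f\,d\mu = \int f\,d\mu^+ - \int f\,d\mu^-$, apply the triangle inequality, and use $\|f\|_\infty \le 1$ to control each piece by $\mu^\pm(\mathcal{X})$, giving the required upper bound.

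To see that the supremum over $\mathcal{F}$ is in fact attained, the natural extremal witness is $f_\star = \mathbf{1}_{A^+} - \mathbf{1}_{A^-}$, which lies in $\mathcal{F}$ and plugs directly into the Hahn--Jordan decomposition to realize the maximum. I expect the main obstacle to be the careful invocation of the Hahn--Jordan decomposition and the bookkeeping needed to identify $\delta(P_1,P_2)$ with $\mu^+(\mathcal{X})$; once this identification is in hand, both inequalities follow mechanically. A secondary subtlety is conventional: the duality between $\delta$ and integration against functions in the $L^\infty$ unit ball can pick up a factor of two depending on whether one uses $\{f:\|f\|_\infty\le 1\}$ or $\{f: 0\le f\le 1\}$, so one must be attentive to the normalization implicit in the statement.
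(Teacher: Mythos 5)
Two remarks before the substance: the paper itself does not prove this lemma at all---it is invoked by citation (``see \citep{TOT}, Theorem 5.4'')---so the comparison is with the standard duality argument, which is indeed the Hahn--Jordan route you sketch; and your first direction (indicators $\mathbf{1}_A$ lie in $\mathcal{F}$, hence $\delta(P_1,P_2)\le \sup_{f\in\mathcal F}\abs{\mathbb{E}_{P_1}f-\mathbb{E}_{P_2}f}$) is fine. The gap is in the reverse direction, and it is not the ``secondary, conventional'' subtlety you defer at the end---it is the crux. Writing $\mu:=P_1-P_2=\mu^+-\mu^-$ with $\mu^+(\mathcal X)=\mu^-(\mathcal X)=\delta(P_1,P_2)$ (this identification is correct), your triangle-inequality step only yields $\abs{\int f\,d\mu}\le \mu^+(\mathcal X)+\mu^-(\mathcal X)=2\,\delta(P_1,P_2)$ for $\Norm{f}_\infty\le 1$, not $\delta(P_1,P_2)$. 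Worse, your proposed extremal witness $f_\star=\mathbf{1}_{A^+}-\mathbf{1}_{A^-}$ gives $\int f_\star\,d\mu=\mu(A^+)-\mu(A^-)=\mu^+(\mathcal X)+\mu^-(\mathcal X)=2\,\delta(P_1,P_2)$, so carrying out your plan proves $\sup_{f\in\mathcal F}\abs{\mathbb{E}_{P_1}f-\mathbb{E}_{P_2}f}=2\,\delta(P_1,P_2)$, which contradicts the displayed equality whenever $P_1\neq P_2$. The correct duality over the full $L^\infty$ unit ball carries a factor $\tfrac12$ in front of the supremum; equivalently, the stated equality holds verbatim only if $\mathcal F$ is replaced by $\{f: 0\le f\le 1\}$ (note that $f\mapsto (f+1)/2$ maps the unit ball onto $[0,1]$-valued functions and halves $\int f\,d\mu$, since $\mu(\mathcal X)=0$).

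The repair is easy and is all the paper actually needs. In the downstream application (the bound $\mathbb{E}_{\*x\sim P^{\innn}_{\mathcal X}}(ES(\*x))-\mathbb{E}_{\*x\sim P^{\out}_{\mathcal X}}(ES(\*x))\le k\,\delta(P^{\innn}_{\mathcal X},P^{\out}_{\mathcal X})$), the lemma is applied to $ES(\*x)/k$, which takes values in $[0,1]$. For $0\le f\le 1$ your Hahn--Jordan bookkeeping does close: $\int f\,d\mu\le\int f\,d\mu^+\le\mu^+(\mathcal X)=\delta(P_1,P_2)$ and symmetrically $\int f\,d\mu\ge-\mu^-(\mathcal X)=-\delta(P_1,P_2)$, with equality attained at $f=\mathbf{1}_{A^+}$; hence $\sup_{0\le f\le 1}\abs{\mathbb{E}_{P_1}f-\mathbb{E}_{P_2}f}=\delta(P_1,P_2)$. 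So either restate the lemma with the factor $\tfrac12$ (or with $\mathcal F$ the $[0,1]$-valued functions), after which your argument is complete, or, as written, the proposal cannot prove the statement---its own extremal function refutes it.
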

Next, recall definition of Kullback–Leibler(KL) divergence,
\begin{deff}[Kullback–Leibler divergence]
\label{KL}
Let $P_{1},P_{2}\in P(\mathcal{X})$ be two probability measures with density functions $p_{1}$ and $p_{2}$ respectively. The Kullback–Leibler(KL) divergence is defined by,
\begin{align*} 
      KL(P_{1}||P_{2}):=\int_{\mathcal{X}}\ln(\frac{p_{1}(\*x)}{p_{2}(\*x)})p_{1}(\*x)d\*x,
\end{align*}
whenever the above integral is defined. 
\end{deff}
Next, recall the following  standard lemma that computes KL divergence between {\it multivariate normal (MVN)} distributions,
\begin{lem}
\label{KL_Gauss}
Let $P_{1}\sim \mathcal{N}(\boldsymbol{\mu}_{1},\Sigma)$ and $P_{2}\sim \mathcal{N}(\boldsymbol{\mu}_{2},\Sigma)$ then we have the following,
\begin{align*}
KL(P_{1}||P_{2})=\frac{1}{2}((\boldsymbol{\mu}_{1}-\boldsymbol{\mu}_{2})^{\top}\Sigma^{-1}(\boldsymbol{\mu}_{1}-\boldsymbol{\mu}_{2}))=\frac{1}{2}d^{2}_{M}(\boldsymbol{\mu}_{1},\boldsymbol{\mu}_{2}).
\end{align*}
\end{lem}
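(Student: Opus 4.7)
The plan is to unwind the definition of KL divergence given in Definition \ref{KL} and exploit the cancellation that arises from the shared covariance matrix. First, I would substitute the explicit Gaussian densities $p_{1}$ and $p_{2}$ into $\ln(p_{1}/p_{2})$. Because both densities share the normalizing constant $\frac{1}{\sqrt{(2\pi)^{d}|\Sigma|}}$, it cancels entirely, leaving only a difference of the two quadratic forms in the exponents:
$$\ln\frac{p_{1}(\*x)}{p_{2}(\*x)} = -\tfrac{1}{2}(\*x-\boldsymbol{\mu}_{1})^{\top}\Sigma^{-1}(\*x-\boldsymbol{\mu}_{1}) + \tfrac{1}{2}(\*x-\boldsymbol{\mu}_{2})^{\top}\Sigma^{-1}(\*x-\boldsymbol{\mu}_{2}).$$

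Next, I would expand each quadratic form. The pure $\*x^{\top}\Sigma^{-1}\*x$ terms appear with opposite signs and cancel, and what remains is an \emph{affine} function of $\*x$:
$$\ln\frac{p_{1}(\*x)}{p_{2}(\*x)} = (\boldsymbol{\mu}_{1}-\boldsymbol{\mu}_{2})^{\top}\Sigma^{-1}\*x - \tfrac{1}{2}\boldsymbol{\mu}_{1}^{\top}\Sigma^{-1}\boldsymbol{\mu}_{1} + \tfrac{1}{2}\boldsymbol{\mu}_{2}^{\top}\Sigma^{-1}\boldsymbol{\mu}_{2}.$$
This linearity in $\*x$ is the key simplification: no quadratic term survives, so only the first moment of $P_{1}$ will be needed, and there is no need to compute any trace or determinant.

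Finally, I would take the expectation with respect to $P_{1}\sim\mathcal{N}(\boldsymbol{\mu}_{1},\Sigma)$. Using $\mathbb{E}_{\*x\sim P_{1}}[\*x]=\boldsymbol{\mu}_{1}$, the linear term contributes $(\boldsymbol{\mu}_{1}-\boldsymbol{\mu}_{2})^{\top}\Sigma^{-1}\boldsymbol{\mu}_{1}$, and combining with the constant terms yields
$$KL(P_{1}||P_{2}) = \tfrac{1}{2}\boldsymbol{\mu}_{1}^{\top}\Sigma^{-1}\boldsymbol{\mu}_{1} - \boldsymbol{\mu}_{2}^{\top}\Sigma^{-1}\boldsymbol{\mu}_{1} + \tfrac{1}{2}\boldsymbol{\mu}_{2}^{\top}\Sigma^{-1}\boldsymbol{\mu}_{2} = \tfrac{1}{2}(\boldsymbol{\mu}_{1}-\boldsymbol{\mu}_{2})^{\top}\Sigma^{-1}(\boldsymbol{\mu}_{1}-\boldsymbol{\mu}_{2}),$$
where the last equality uses symmetry of $\Sigma^{-1}$ to merge the two cross terms. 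By Definition \ref{def:mahal} this is exactly $\tfrac{1}{2}d_{M}^{2}(\boldsymbol{\mu}_{1},\boldsymbol{\mu}_{2})$.

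There is no real obstacle here; this is a standard textbook computation. The only mild subtlety is remembering to invoke symmetry of $\Sigma^{-1}$ (automatic, since $\Sigma$ is a covariance matrix) when collapsing the three scalar terms into a single quadratic form. Were the two covariances different, one would additionally have to carry the terms $\tfrac{1}{2}\bigl(\log\tfrac{|\Sigma_{2}|}{|\Sigma_{1}|}+\operatorname{tr}(\Sigma_{2}^{-1}\Sigma_{1})-d\bigr)$; in the tied-covariance setting adopted throughout Section \ref{GDA}, these vanish identically, which is precisely why the formula collapses to a clean Mahalanobis-squared distance.
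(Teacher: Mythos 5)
Your computation is correct: with a tied covariance the normalizing constants cancel, the log-ratio is affine in $\*x$, and taking the mean under $P_{1}$ gives exactly $\frac{1}{2}(\boldsymbol{\mu}_{1}-\boldsymbol{\mu}_{2})^{\top}\Sigma^{-1}(\boldsymbol{\mu}_{1}-\boldsymbol{\mu}_{2})=\frac{1}{2}d_{M}^{2}(\boldsymbol{\mu}_{1},\boldsymbol{\mu}_{2})$. The paper does not prove this lemma at all---it simply recalls it as a standard fact about Gaussians---and your derivation is precisely the textbook argument that fact rests on, so there is nothing to add.
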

Next, we recall the following inequality that bounds the total variation by KL-divergence. (See for example \citep{AlexT}, Lemma 2.5 and Lemma 2.6])
\begin{lem}[Pinsker inequality]
\label{Pin}
Let $P_{1},P_{2}\in P(\mathcal{X})$ then we have the following,
\begin{align*}
\delta(P_{1},P_{2})\le 	\sqrt{\frac{1}{2}KL(P_{1}||P_{2})},
\end{align*}
the following version, which is useful when KL is large, is also true,
\begin{align*}
\delta(P_{1},P_{2})\le 1-\frac{1}{2}\exp(-KL(P_{1}||P_{2})).	
\end{align*}
\end{lem}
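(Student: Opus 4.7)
The plan is to follow the classical two-step reduction to Bernoulli random variables, which handles both inequalities in a unified way. First, I would invoke the standard fact that the supremum in Definition~\ref{TV} is attained on $A:=\{\*x : p_{1}(\*x)\ge p_{2}(\*x)\}$, giving $\delta(P_{1},P_{2})=P_{1}(A)-P_{2}(A)$. Setting $q_{i}:=P_{i}(A)$ for $i=1,2$, I would then apply the data-processing inequality for KL (equivalently, the log-sum inequality) to the two-cell partition $\{A,A^{c}\}$ to obtain
\begin{align*}
KL(P_{1}\|P_{2}) \;\ge\; KL(\mathrm{Ber}(q_{1})\|\mathrm{Ber}(q_{2})).
\end{align*}
Since $\delta(\mathrm{Ber}(q_{1}),\mathrm{Ber}(q_{2}))=q_{1}-q_{2}=\delta(P_{1},P_{2})$, both bounds reduce to scalar statements about a pair of Bernoullis.

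For the first (Pinsker) bound, it then suffices to show $2(q_{1}-q_{2})^{2}\le KL(\mathrm{Ber}(q_{1})\|\mathrm{Ber}(q_{2}))$. I would introduce the auxiliary function
\begin{align*}
g(p,q):=p\log\tfrac{p}{q}+(1-p)\log\tfrac{1-p}{1-q}-2(p-q)^{2},
\end{align*}
and verify by direct differentiation that $g(q,q)=0$, $\partial_{p}g(q,q)=0$, and $\partial_{p}^{2}g(p,q)=\frac{1}{p(1-p)}-4\ge 0$ using $p(1-p)\le\tfrac{1}{4}$. Convexity in $p$ with a stationary point at $p=q$ then forces $g\ge 0$, closing the first bound.

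For the second bound I would route through the Bhattacharyya coefficient $\rho(P_{1},P_{2}):=\int\sqrt{p_{1}p_{2}}\,d\*x$. Writing $|p_{1}-p_{2}|=|\sqrt{p_{1}}-\sqrt{p_{2}}|\cdot(\sqrt{p_{1}}+\sqrt{p_{2}})$ and applying Cauchy--Schwarz to $\delta(P_{1},P_{2})=\tfrac{1}{2}\int|p_{1}-p_{2}|\,d\*x$ produces $\delta^{2}\le(1-\rho)(1+\rho)=1-\rho^{2}$, since $\int(\sqrt{p_{1}}\pm\sqrt{p_{2}})^{2}d\*x=2\pm 2\rho$. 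Then Jensen's inequality applied to $\log\rho=\log\mathbb{E}_{\*x\sim P_{1}}[\sqrt{p_{2}(\*x)/p_{1}(\*x)}]$ via concavity of $\log$ yields $\rho\ge\exp(-\tfrac{1}{2}KL(P_{1}\|P_{2}))$. Combining gives the Bretagnolle--Huber estimate $\delta(P_{1},P_{2})\le\sqrt{1-\exp(-KL(P_{1}\|P_{2}))}$, and the elementary bound $\sqrt{1-x}\le 1-x/2$ for $x\in[0,1]$ converts this to the stated form.

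The main obstacle is the second inequality: the Cauchy--Schwarz step requires the right pairing $|\sqrt{p_{1}}\pm\sqrt{p_{2}}|$ so that $\rho$ emerges cleanly, and the Jensen step must be applied to $\log\sqrt{\cdot}$ in the correct direction to produce $-\tfrac{1}{2}KL$. By contrast, the Bernoulli reduction and the calculus check for Pinsker are routine once the auxiliary function $g$ is identified.
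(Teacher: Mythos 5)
Your argument is correct, but note that the paper does not prove this lemma at all: it is recalled as a standard fact and attributed to the literature (Tsybakov, Lemmas 2.5 and 2.6), so there is no in-paper proof to compare against. Your proposal supplies a valid self-contained proof along the classical lines. For the first bound, the reduction to Bernoulli measures via the set $A=\{p_1\ge p_2\}$ and the log-sum (data-processing) inequality, followed by the convexity check $\partial_p^2 g(p,q)=\tfrac{1}{p(1-p)}-4\ge 0$ with $g(q,q)=\partial_p g(q,q)=0$, is exactly the textbook route and is sound. For the second bound, your chain $\delta\le\sqrt{1-\rho^{2}}$ (Cauchy--Schwarz applied to $|p_1-p_2|=|\sqrt{p_1}-\sqrt{p_2}|(\sqrt{p_1}+\sqrt{p_2})$, using $\int(\sqrt{p_1}\pm\sqrt{p_2})^{2}=2\pm2\rho$), together with $\rho\ge\exp\bigl(-\tfrac{1}{2}KL(P_1\|P_2)\bigr)$ from Jensen and the elementary estimate $\sqrt{1-x}\le 1-\tfrac{x}{2}$ on $[0,1]$, correctly delivers the stated form $\delta(P_1,P_2)\le 1-\tfrac{1}{2}\exp(-KL(P_1\|P_2))$; in fact you prove the slightly stronger Bretagnolle--Huber bound $\delta\le\sqrt{1-\exp(-KL)}$ along the way. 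The only implicit hypothesis worth flagging is the existence of densities with respect to a common dominating measure (needed for the set $A$ and for $\rho$), which is harmless here since the paper's definition of KL already assumes densities, and the lemma is only applied to Gaussians.
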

%%%%%%%%%%%%%%%%%%%%%%%%%%%%%%%%%%%%%%%%%%%%%
\section{Proof of the Energy Bounds}
\label{app:energy-bound}
%%%%%%%%%%%%%%%%%%%%%%%%%%%%%%%%%%%%%%%%%%%%%
We start by proving the lower bound for $D$. We first prove the following lemma,
\begin{lem}
\label{Lower}
For $1\le i\le k$, let $\alpha_{i}:=\frac{1}{2}d_{M}(\boldsymbol{\mu}_{i},\boldsymbol{\mu}_{\out})$ then we have the following estimate,
\begin{align*}
\mathbb{E}_{\*x \sim P^{\out}_{\mathcal{X}}}	(ES_{i}(\*x))\le 	\big{(}1-P_{\mathcal{X}}^{\out}(B_{\alpha_{i}}(\boldsymbol{\mu}_{\out}))\big{)}+\exp(-\frac{\alpha_{i}^{2}}{2})P_{\mathcal{X}}^{\out}(B_{\alpha_{i}}(\boldsymbol{\mu}_{\out})).	\end{align*}
\end{lem}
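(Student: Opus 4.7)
The plan is to split the expectation $\mathbb{E}_{\*x \sim P^{\out}_{\mathcal{X}}}(ES_{i}(\*x))$ according to whether $\*x$ lies inside or outside the Mahalanobis ball $B_{\alpha_{i}}(\boldsymbol{\mu}_{\out})$, and to bound $ES_{i}(\*x)$ by different estimates on each of the two pieces. Writing
\[
\mathbb{E}_{\*x \sim P^{\out}_{\mathcal{X}}}(ES_{i}(\*x)) = \int_{B_{\alpha_{i}}(\boldsymbol{\mu}_{\out})} ES_{i}(\*x)\, dP^{\out}_{\mathcal{X}}(\*x) + \int_{B_{\alpha_{i}}(\boldsymbol{\mu}_{\out})^{c}} ES_{i}(\*x)\, dP^{\out}_{\mathcal{X}}(\*x),
\]
the two resulting terms will correspond exactly to the two summands in the claimed bound.

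For the exterior integral, I would use the trivial pointwise bound $ES_{i}(\*x) = \exp\!\bigl(-\tfrac{1}{2} d_{M}^{2}(\*x, \boldsymbol{\mu}_{i})\bigr) \le 1$, which immediately yields
\[
\int_{B_{\alpha_{i}}(\boldsymbol{\mu}_{\out})^{c}} ES_{i}(\*x)\, dP^{\out}_{\mathcal{X}}(\*x) \le P^{\out}_{\mathcal{X}}\!\bigl(B_{\alpha_{i}}(\boldsymbol{\mu}_{\out})^{c}\bigr) = 1 - P^{\out}_{\mathcal{X}}\!\bigl(B_{\alpha_{i}}(\boldsymbol{\mu}_{\out})\bigr).
\]

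For the interior integral, the key is to establish a sharper pointwise bound on $ES_{i}$. Since $\Sigma^{-1}$ is positive definite, $d_{M}$ is a bona fide metric on $\R^{d}$ and satisfies the (reverse) triangle inequality. For $\*x \in B_{\alpha_{i}}(\boldsymbol{\mu}_{\out})$ we have $d_{M}(\*x, \boldsymbol{\mu}_{\out}) < \alpha_{i}$, while by definition $d_{M}(\boldsymbol{\mu}_{i}, \boldsymbol{\mu}_{\out}) = 2\alpha_{i}$. Thus
\[
d_{M}(\*x, \boldsymbol{\mu}_{i}) \ge d_{M}(\boldsymbol{\mu}_{i}, \boldsymbol{\mu}_{\out}) - d_{M}(\*x, \boldsymbol{\mu}_{\out}) > 2\alpha_{i} - \alpha_{i} = \alpha_{i},
\]
so $ES_{i}(\*x) < \exp(-\alpha_{i}^{2}/2)$ on $B_{\alpha_{i}}(\boldsymbol{\mu}_{\out})$. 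Integrating then gives the second summand $\exp(-\alpha_{i}^{2}/2)\, P^{\out}_{\mathcal{X}}(B_{\alpha_{i}}(\boldsymbol{\mu}_{\out}))$.

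Adding the two contributions yields the desired inequality. The only subtle point, and the one I would be most careful about, is the justification of the Mahalanobis triangle inequality used on the interior piece; this follows because $\langle \*u, \*v\rangle_{\Sigma^{-1}} := \*u^{\top}\Sigma^{-1}\*v$ is a genuine inner product on $\R^{d}$, so $d_{M}$ is the norm metric induced by it. Everything else is a direct two-region split and a crude pointwise bound, so no routine calculation remains.
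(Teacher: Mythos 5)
Your proof is correct and follows essentially the same route as the paper's: the same split of the expectation over $B_{\alpha_{i}}(\boldsymbol{\mu}_{\out})$ and its complement, the trivial bound $ES_{i}\le 1$ outside, and the Mahalanobis triangle inequality inside to get $d_{M}(\*x,\boldsymbol{\mu}_{i})\ge \alpha_{i}$ and hence $ES_{i}(\*x)\le \exp(-\alpha_{i}^{2}/2)$. Your extra justification that $d_{M}$ is a genuine metric (as the norm induced by the inner product $\*u^{\top}\Sigma^{-1}\*v$) is a detail the paper leaves implicit, but it changes nothing substantive.
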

\begin{proof}
We have,
\begin{align*}
	&\mathbb{E}_{\*x \sim P^{\out}_{\mathcal{X}}}	(ES_{i}(\*x))=\int_{\R^{d}}ES_{i}(\*x)p_{\mathcal{X}}^{\out}(\*x)d\*x=\int_{B_{\alpha_{i}}(\boldsymbol{\mu}_{\out})}ES_{i}(\*x)p_{\mathcal{X}}^{\out}(\*x)d\*x+\int_{B_{\alpha_{i}}^{c}(\boldsymbol{\mu}_{\out})}ES_{i}(\*x)p_{\mathcal{X}}^{\out}(\*x)d\*x.
\end{align*}
Next, for $\*x\in B_{\alpha_{i}}(\boldsymbol{\mu}_{\out})$, by triangle inequality and definition we have,
\begin{align*}
&d_{M}(\boldsymbol{\mu}_{\out},\*x)+d_{M}(\*x,\boldsymbol{\mu}_{i})\ge d_{M}(\boldsymbol{\mu}_{\out},\boldsymbol{\mu}_{i})\implies d_{M}(\*x,\boldsymbol{\mu}_{i})	\ge \alpha_{i}=\frac{1}{2}d_{M}(\boldsymbol{\mu}_{i},\boldsymbol{\mu}_{\out}).
\end{align*}
So for the first term, we have,
\begin{align*}
	&\int_{B_{\alpha_{i}}(\boldsymbol{\mu}_{\out})}ES_{i}(\*x)p_{\mathcal{X}}^{\out}(\*x)d\*x=\int_{B_{\alpha_{i}}(\boldsymbol{\mu}_{\out})}\exp(-\frac{1}{2}d_{M}^{2}(\*x,\boldsymbol{\mu}_{i}))p^{\out}(\*x)d\*x \le \exp(-\frac{1}{2}\alpha_{i}^{2})P_{\mathcal{X}}^{\out}(B_{\alpha_{i}}(\boldsymbol{\mu}_{\out})).
\end{align*}
For the second term, since $ES_{i}(\*x)\le 1$, we have,
\begin{align*}
\int_{B_{\alpha_{i}}^{c}(\boldsymbol{\mu}_{\out})}ES_{i}(\*x)p_{\mathcal{X}}^{\out}(\*x)d\*x\le 	\big{(}1-P_{\mathcal{X}}^{\out}(B_{\alpha_{i}}(\boldsymbol{\mu}_{\out}))\big{)}.\end{align*}
Putting all together we have,
\begin{align*}
	&\mathbb{E}_{\*x \sim P^{\out}_{\mathcal{X}}}ES_{i}(\*x)\le \big{(}1-P_{\mathcal{X}}^{\out}(B_{\alpha_{i}}(\boldsymbol{\mu}_{\out}))\big{)}+\exp(-\frac{1}{2}\alpha_{i}^{2})P_{\mathcal{X}}^{\out}(B_{\alpha_{i}}(\boldsymbol{\mu}_{\out})),
\end{align*}
and we are done.
\end{proof}
\begin{prop}
\label{LowerProp}
We have the following estimate,
\begin{align*}
\mathbb{E}_{\*x \sim P^{\out}_{\mathcal{X}}}	(ES(\*x))\le 	\sum_{i=1}^{k}\big{(}1-P_{\mathcal{X}}^{\out}(B_{\alpha_{i}}(\boldsymbol{\mu}_{\out}))\big{)}+\sum_{i=1}^{k}\exp(-\frac{\alpha_{i}^{2}}{2})P_{\mathcal{X}}^{\out}(B_{\alpha_{i}}(\boldsymbol{\mu}_{\out})),	\end{align*}
where, $\alpha_{i}:=\frac{1}{2}d_{M}(\boldsymbol{\mu}_{i},\boldsymbol{\mu}_{\out})$.
\end{prop}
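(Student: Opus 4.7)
The plan is very short because Proposition \ref{LowerProp} is essentially a direct aggregation of Lemma \ref{Lower} over the $k$ class indices. The key structural observation is that the re-scaled GEM score decomposes additively as $ES(\*x)=\sum_{i=1}^{k}ES_{i}(\*x)$ by its very definition, and expectation is linear regardless of the underlying measure. So the first step is simply to write
\begin{align*}
\mathbb{E}_{\*x \sim P^{\out}_{\mathcal{X}}}(ES(\*x))=\sum_{i=1}^{k}\mathbb{E}_{\*x \sim P^{\out}_{\mathcal{X}}}(ES_{i}(\*x)),
\end{align*}
with no regularity concerns since each $ES_{i}$ is bounded by $1$ and hence integrable against $p_{\mathcal{X}}^{\out}$.

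Next I would invoke Lemma \ref{Lower} term by term. For each $i\in\{1,\dots,k\}$, that lemma provides the per-class bound
\begin{align*}
\mathbb{E}_{\*x \sim P^{\out}_{\mathcal{X}}}(ES_{i}(\*x))\le\bigl(1-P_{\mathcal{X}}^{\out}(B_{\alpha_{i}}(\boldsymbol{\mu}_{\out}))\bigr)+\exp(-\tfrac{1}{2}\alpha_{i}^{2})\,P_{\mathcal{X}}^{\out}(B_{\alpha_{i}}(\boldsymbol{\mu}_{\out})),
\end{align*}
with $\alpha_{i}=\tfrac{1}{2}d_{M}(\boldsymbol{\mu}_{i},\boldsymbol{\mu}_{\out})$. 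Summing these $k$ inequalities and rearranging into the two displayed sums gives exactly the statement of Proposition \ref{LowerProp}.

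There is essentially no obstacle here: the only thing to check is that the decomposition into the ball $B_{\alpha_{i}}(\boldsymbol{\mu}_{\out})$ and its complement, and the triangle-inequality argument yielding $d_{M}(\*x,\boldsymbol{\mu}_{i})\ge\alpha_{i}$ inside the ball, have already been carried out in Lemma \ref{Lower}. One small remark I would include for clarity is that the bound is not tight when several $\boldsymbol{\mu}_{i}$ lie close together, since the union-bound style sum over $i$ does not exploit overlaps between the balls $B_{\alpha_{i}}(\boldsymbol{\mu}_{\out})$; this is the only mild slack introduced by treating the $k$ terms independently. Beyond this comment, the proof is a one-line consequence of linearity of expectation applied to Lemma \ref{Lower}.
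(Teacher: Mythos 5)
Your proposal is correct and matches the paper's own argument, which likewise proves Proposition \ref{LowerProp} as a direct application of Lemma \ref{Lower}: decompose $ES(\*x)=\sum_{i=1}^{k}ES_{i}(\*x)$, use linearity of expectation, and sum the per-class bounds. The extra remark on slack from treating the $k$ terms independently is a fine observation but not needed for the proof.
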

\begin{proof}
The proof is a direct application of Lemma \ref{Lower}.	
\end{proof}
\begin{cor}
\label{clean-lower}
Notice that since $P_{\mathcal{X}}^{\out}(B_{\alpha_{i}}(\boldsymbol{\mu}_{\out}))\le 1$ we also have the following bound from Proposition \ref{LowerProp}.
\begin{align*}
\mathbb{E}_{\*x \sim P^{\out}_{\mathcal{X}}}	(ES(\*x))\le 	\sum_{i=1}^{k}\big{(}1-P_{\mathcal{X}}^{\out}(B_{\alpha_{i}}(\boldsymbol{\mu}_{\out}))\big{)}+\sum_{i=1}^{k}\exp(-\frac{\alpha_{i}^{2}}{2}),	\end{align*}
where, $\alpha_{i}:=\frac{1}{2}d_{M}(\boldsymbol{\mu}_{i},\boldsymbol{\mu}_{\out})$.
\end{cor}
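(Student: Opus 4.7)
The plan is to derive this corollary as an essentially immediate consequence of Proposition \ref{LowerProp}, so no new machinery is required beyond the elementary fact that any probability is bounded above by $1$.

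First I would simply invoke Proposition \ref{LowerProp}, which already establishes
\begin{align*}
\mathbb{E}_{\*x \sim P^{\out}_{\mathcal{X}}}(ES(\*x)) \le \sum_{i=1}^{k}\big(1 - P_{\mathcal{X}}^{\out}(B_{\alpha_{i}}(\boldsymbol{\mu}_{\out}))\big) + \sum_{i=1}^{k}\exp(-\tfrac{\alpha_{i}^{2}}{2})\,P_{\mathcal{X}}^{\out}(B_{\alpha_{i}}(\boldsymbol{\mu}_{\out})),
\end{align*}
with $\alpha_{i}=\tfrac{1}{2}d_{M}(\boldsymbol{\mu}_{i},\boldsymbol{\mu}_{\out})$. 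The only thing left is to weaken the second sum: since $P_{\mathcal{X}}^{\out}(B_{\alpha_{i}}(\boldsymbol{\mu}_{\out})) \in [0,1]$ as a probability, and since the factor $\exp(-\alpha_{i}^{2}/2)$ is nonnegative, each summand satisfies $\exp(-\alpha_{i}^{2}/2)\cdot P_{\mathcal{X}}^{\out}(B_{\alpha_{i}}(\boldsymbol{\mu}_{\out})) \le \exp(-\alpha_{i}^{2}/2)$. Summing over $1 \le i \le k$ replaces the second term in the bound of Proposition \ref{LowerProp} with $\sum_{i=1}^{k}\exp(-\alpha_{i}^{2}/2)$, which is precisely the right-hand side asserted by the corollary.

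There is no serious obstacle here: the corollary is just the observation that the $P_{\mathcal{X}}^{\out}(B_{\alpha_{i}}(\boldsymbol{\mu}_{\out}))$ weights in the exponential term of Proposition \ref{LowerProp} can be discarded at the cost of a slightly looser but cleaner bound. The resulting form is convenient because it is the one used in Corollary \ref{main-cor-1} of the main paper to analyze the asymptotic behavior as $\alpha \to \infty$, where the first sum tends to $0$ (since $P_{\mathcal{X}}^{\out}(B_{\alpha_{i}}(\boldsymbol{\mu}_{\out})) \to 1$) and the second sum tends to $0$ (by the exponential decay), without needing to track the product structure explicitly.
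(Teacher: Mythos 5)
Your proof is correct and follows exactly the paper's reasoning: both simply take the bound of Proposition \ref{LowerProp} and discard the factor $P_{\mathcal{X}}^{\out}(B_{\alpha_{i}}(\boldsymbol{\mu}_{\out}))\le 1$ in the exponential term. Nothing further is needed.
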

% \begin{cor}
% \label{Lower_Cor}
% Proposition \ref{LowerProp} implies,	
% \begin{align*}
% 	&\mathbb{E}_{\*x \sim P^{\innn}_{\mathcal{X}}}	(ES(\*x))-\sum_{i=1}^{k}\Big{(}\big{(}1-P_{\mathcal{X}}^{\out}(B_{\alpha_{i}}(\mu_{\out}))\big{)}+\exp(-\frac{1}{2}\alpha_{i}^{2})P_{\mathcal{X}}^{\out}(B_{\alpha_{i}}(\mu_{\out}))\Big{)}\le \mathbb{E}_{\*x \sim P^{\innn}_{\mathcal{X}}}	(ES(\*x))-\mathbb{E}_{\*x \sim P^{\out}_{\mathcal{X}}}	(ES(\*x)).
% \end{align*}
% \end{cor}
Next, we state and prove the following proposition that gives an upper bound for $D$ in terms of  Mahalanobis distance between in-distribution means and out-distribution mean,
\begin{prop}
\label{energy-bound}
For $1\le i\le k$, set $\alpha_{i}:=\frac{1}{2}d_{M}(\boldsymbol{\mu}_{i},\boldsymbol{\mu}_{\out})$. We have the following,	
\begin{align*}
	\mathbb{E}_{\*x \sim P^{\innn}_{\mathcal{X}}}	(ES(\*x))-\mathbb{E}_{\*x \sim P^{\out}_{\mathcal{X}}}	(ES(\*x))\le \sum_{i=1}^{k}\alpha_{i}.
\end{align*}
\end{prop}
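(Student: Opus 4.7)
The plan is to reduce the desired bound to a total-variation comparison between each mixture component $P_i := \mathcal{N}(\boldsymbol{\mu}_i, \Sigma)$ of $P^{\innn}_{\mathcal{X}}$ and the out-of-distribution Gaussian $P^{\out}_{\mathcal{X}}$, and then to control that total variation via Pinsker's inequality together with the closed-form KL divergence between Gaussians sharing a covariance matrix.

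First I would exploit the equal-prior mixture structure of $P^{\innn}_{\mathcal{X}}$ and the linearity of expectation to write
$$D = \mathbb{E}_{P^{\innn}_{\mathcal{X}}}(ES) - \mathbb{E}_{P^{\out}_{\mathcal{X}}}(ES) = \frac{1}{k}\sum_{i=1}^{k}\Bigl(\mathbb{E}_{P_i}(ES) - \mathbb{E}_{P^{\out}_{\mathcal{X}}}(ES)\Bigr),$$
so the problem reduces to bounding each term $|\mathbb{E}_{P_i}(ES) - \mathbb{E}_{P^{\out}_{\mathcal{X}}}(ES)|$ by $k\alpha_i$.

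The second step uses the sup-norm of the test function. Since every $ES_j(\*x) = \exp(-\tfrac12 d_M^2(\*x,\boldsymbol{\mu}_j)) \in (0,1]$ and $ES = \sum_{j=1}^{k} ES_j$, we have $\|ES\|_\infty \le k$, so the rescaled function $ES/k$ belongs to the unit ball $\mathcal{F}$ of $L^{\infty}(\mathcal{X})$. The dual characterization of total variation (Lemma \ref{total_char}) applied to $ES/k$ then yields
$$\bigl|\mathbb{E}_{P_i}(ES) - \mathbb{E}_{P^{\out}_{\mathcal{X}}}(ES)\bigr| \le k\,\delta(P_i, P^{\out}_{\mathcal{X}}).$$
Next I would chain Pinsker's inequality in its first form (Lemma \ref{Pin}) with the explicit KL formula for Gaussians sharing the covariance $\Sigma$ (Lemma \ref{KL_Gauss}) to obtain
$$\delta(P_i, P^{\out}_{\mathcal{X}}) \le \sqrt{\tfrac{1}{2}KL(P_i \,\|\, P^{\out}_{\mathcal{X}})} = \sqrt{\tfrac{1}{4}d_M^2(\boldsymbol{\mu}_i, \boldsymbol{\mu}_{\out})} = \alpha_i.$$

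Finally, combining these estimates with the triangle inequality gives $|D| \le \tfrac{1}{k}\sum_{i=1}^{k} k\,\alpha_i = \sum_{i=1}^{k}\alpha_i$, and since $D \le |D|$ this is exactly the desired upper bound. I do not anticipate any real obstacles here; the only bookkeeping point that requires care is making the factor $\|ES\|_\infty \le k$ cancel cleanly against the $1/k$ coming from the equal mixture weights, so that the final bound is a clean $\sum_{i=1}^{k}\alpha_i$ rather than something inflated by a spurious factor of $k$. Note also that the second Pinsker form in Lemma \ref{Pin} would give a complementary bound that is useful when $d_M(\boldsymbol{\mu}_i,\boldsymbol{\mu}_{\out})$ is large, but the first form already suffices for the linear-in-$\alpha_i$ statement claimed in the proposition.
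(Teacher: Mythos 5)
Your proposal is correct and follows essentially the same route as the paper's proof: the dual ($L^\infty$ unit-ball) characterization of total variation applied with $\|ES\|_\infty\le k$, Pinsker's inequality, and the closed-form Gaussian KL divergence $\tfrac12 d_M^2(\boldsymbol{\mu}_i,\boldsymbol{\mu}_{\out})$, with the factors $k$ and $1/k$ cancelling exactly as in the paper. The only cosmetic difference is that you split the mixture by linearity of expectation before invoking the TV bound componentwise, whereas the paper first bounds $D\le k\,\delta(P^{\innn}_{\mathcal{X}},P^{\out}_{\mathcal{X}})$ and then uses subadditivity of total variation over the mixture---the two orderings are equivalent.
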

\begin{proof}
%Since the covariance matrix $\Sigma$ is PSD, it follows that $\Sigma^{-1}$ is also PSD. Therefore, we have the following,
%\begin{align*}
%-(\*x-\mu_{i})^{\top}\Sigma^{-1}(\*x-\mu_{i}) \le 0 \implies 	\exp(-(\*x-\mu_{i})^{\top}\Sigma^{-1}(\*x-\mu_{i})) \le 1 \implies ES_{i}(\*x)\in [0,1]\implies ES(\*x)\in [0,k].
%\end{align*}
First, notice that, for $1\le i\le k$,
\begin{align*}
  ES_{i}(\*x)\in [0,1]\implies ES(\*x)\in [0,k].  
\end{align*}
Therefore, by Lemma \ref{total_char}, we have,
\begin{align*}
\mathbb{E}_{\*x \sim P^{\innn}_{\mathcal{X}}}	(ES(\*x))-\mathbb{E}_{\*x \sim P^{\out}_{\mathcal{X}}}	(ES(\*x))\le k\cdot \delta (	P^{\innn}_{\mathcal{X}},P^{\out}_{\mathcal{X}}).
\end{align*}
Next, recall,
\begin{align*}
	p^{\innn}_{\mathcal{X}}(\*x)=\frac{1}{k}\sum_{i=1}^{k}p_{i}(\*x),
\end{align*}
therefore, let $P_{i}$ denotes the probability distribution correspond to $p_{i}$ and by triangle inequality and the definition of total variation we obtain,
\begin{align*}
	\delta(P^{\innn}_{\mathcal{X}}, P^{\out}_{\mathcal{X}})=\delta(\frac{1}{k}\sum_{i=1}^{k}P_{i}, P^{\out}_{\mathcal{X}})&=\sup_{A\in \B}\abs{\frac{1}{k}\sum_{i=1}^{k}P_{i}(A)-P^{\out}_{\mathcal{X}}(A)}\\
	&\le \frac{1}{k}\sum_{i=1}^{k}\sup_{A \in \B}\abs{P_{i}(A)-P^{\out}_{\mathcal{X}}(A)}\\
	&= \frac{1}{k}\sum_{i=1}^{k}\delta(P_{i},P^{\out}_{\mathcal{X}}).\end{align*}
Finally, by Pinsker's inequality and Lemma \ref{KL_Gauss} we have,
\begin{align*}
	\delta(P_{i}, P^{\out}_{\mathcal{X}})\le \sqrt{\frac{1}{2}KL(P_{i}||P^{\out}_{\mathcal{X}})}=\frac{1}{2}\sqrt{(\boldsymbol{\mu}_{i}-\boldsymbol{\mu}_{out})^{\top}\Sigma^{-1}(\boldsymbol{\mu}_{i}-\boldsymbol{\mu}_{out})}=\frac{1}{2}d_{M}(\boldsymbol{\mu}_{i},\boldsymbol{\mu}_{\out}).
\end{align*}
Putting all together we obtain,
\begin{align*}
\mathbb{E}_{\*x \sim P^{\innn}_{\mathcal{X}}}	(ES(\*x))-\mathbb{E}_{\*x \sim P^{\out}_{\mathcal{X}}}	(ES(\*x))\le \frac{1}{2}\sum_{i=1}^{k}\sqrt{(\boldsymbol{\mu}_{i}-\boldsymbol{\mu}_{out})^{\top}\Sigma^{-1}(\boldsymbol{\mu}_{i}-\boldsymbol{\mu}_{out})}=\frac{1}{2}\sum_{i=1}^{k}d_{M}(\boldsymbol{\mu}_{i},\boldsymbol{\mu}_{\out}),
\end{align*}
and the proof is complete. 
\end{proof}
 \begin{rmk}
Proposition \ref{energy-bound} will be useful when $d_{M}(\boldsymbol{\mu}_{i},\boldsymbol{\mu}_{out})$ are small. Notice that when $d_{M}(\boldsymbol{\mu}_{out},\boldsymbol{\mu}_{i})$ is large then the right hand side of Proposition \ref{energy-bound} is loose. However, in such a case we can instead use the second version of Pinsker's inequality in Lemma \ref{Pin} and obtain the following, 	
\begin{align*}
\mathbb{E}_{\*x \sim P^{\innn}_{\mathcal{X}}}	(ES(\*x))-\mathbb{E}_{\*x \sim P^{\out}_{\mathcal{X}}}	(ES(\*x))\le k-\frac{1}{2}\sum_{i=1}^{k}\exp(-2\alpha_{i}^{2}).
\end{align*}
The proof would be similar to the proof of Proposition \ref{energy-bound}. 
\end{rmk}
Finally we combine all we proved above in the following theorem ,
\begin{thm}
\label{main_thm}
We have the following bounds,	
\begin{itemize}
    \item 
    
	$\mathbb{E}_{\*x \sim P^{\out}_{\mathcal{X}}}	(ES(\*x))\le \sum_{i=1}^{k}\Big{(}\big{(}1-P_{\mathcal{X}}^{\out}(B_{\alpha_{i}}(\boldsymbol{\mu}_{\out}))\big{)}+\exp(-\frac{1}{2}\alpha_{i}^{2})\Big{)}$,
	\item 
	$\mathbb{E}_{\*x \sim P^{\innn}_{\mathcal{X}}}	(ES(\*x))-\mathbb{E}_{\*x \sim P^{\out}_{\mathcal{X}}}	(ES(\*x))\le \sum_{i=1}^{k}\alpha_{i}.$
\end{itemize}
where , $\alpha_{i}:=\frac{1}{2}d_{M}(\boldsymbol{\mu}_{i},\boldsymbol{\mu}_{\out})$ for $1\le i\le k$.
\end{thm}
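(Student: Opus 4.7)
The plan is to prove the two bounds separately, since they rely on different techniques. Throughout, I will exploit the decomposition $ES(\*x) = \sum_{i=1}^{k} ES_i(\*x)$ and work term by term.

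For the first bound on $\mathbb{E}_{\*x \sim P^{\out}_{\mathcal{X}}}(ES(\*x))$, I would establish the per-index estimate
\[
\mathbb{E}_{\*x \sim P^{\out}_{\mathcal{X}}}(ES_i(\*x)) \le \bigl(1 - P^{\out}_{\mathcal{X}}(B_{\alpha_i}(\boldsymbol{\mu}_{\out}))\bigr) + \exp(-\tfrac{1}{2}\alpha_i^2),
\]
and then sum over $i$. To get the per-index bound, I would split the integral defining $\mathbb{E}_{\*x \sim P^{\out}_{\mathcal{X}}}(ES_i(\*x))$ into the Mahalanobis ball $B_{\alpha_i}(\boldsymbol{\mu}_{\out})$ and its complement. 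On the complement, the trivial bound $ES_i(\*x) \le 1$ suffices and yields the mass factor $1 - P^{\out}_{\mathcal{X}}(B_{\alpha_i}(\boldsymbol{\mu}_{\out}))$. On the ball, I would invoke the triangle inequality for $d_M$ (which is valid since $d_M$ is the metric induced by the positive-definite form $\Sigma^{-1}$): any $\*x \in B_{\alpha_i}(\boldsymbol{\mu}_{\out})$ satisfies $d_M(\*x,\boldsymbol{\mu}_i) \ge d_M(\boldsymbol{\mu}_i,\boldsymbol{\mu}_{\out}) - \alpha_i = \alpha_i$, so $ES_i(\*x) \le \exp(-\tfrac{1}{2}\alpha_i^2)$, and integrating against $p^{\out}_{\mathcal{X}}$ gives the second term (after bounding $P^{\out}_{\mathcal{X}}(B_{\alpha_i}(\boldsymbol{\mu}_{\out})) \le 1$).

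For the second bound on $\mathbb{E}_{\*x \sim P^{\innn}_{\mathcal{X}}}(ES(\*x)) - \mathbb{E}_{\*x \sim P^{\out}_{\mathcal{X}}}(ES(\*x))$, my plan is to control this difference via a total variation distance and then dominate that by KL divergence using Pinsker's inequality. Since every $ES_i(\*x) \in [0,1]$, we have $ES(\*x) \in [0,k]$, so $\tfrac{1}{k} ES(\*x)$ lies in the unit ball of $L^\infty(\mathcal{X})$. The variational characterization of TV distance (Lemma~\ref{total_char}) then gives
\[
\mathbb{E}_{\*x \sim P^{\innn}_{\mathcal{X}}}(ES(\*x)) - \mathbb{E}_{\*x \sim P^{\out}_{\mathcal{X}}}(ES(\*x)) \le k \cdot \delta(P^{\innn}_{\mathcal{X}}, P^{\out}_{\mathcal{X}}).
\]

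Finally, to unpack the mixture, I would use $p^{\innn}_{\mathcal{X}} = \tfrac{1}{k}\sum_{i} p_i$ together with the convexity/triangle-inequality property of TV to obtain $\delta(P^{\innn}_{\mathcal{X}}, P^{\out}_{\mathcal{X}}) \le \tfrac{1}{k}\sum_i \delta(P_i, P^{\out}_{\mathcal{X}})$. For each summand, Pinsker (Lemma~\ref{Pin}) combined with the closed form of KL between two Gaussians with tied covariance (Lemma~\ref{KL_Gauss}) yields $\delta(P_i, P^{\out}_{\mathcal{X}}) \le \sqrt{\tfrac{1}{2} \cdot \tfrac{1}{2} d_M^2(\boldsymbol{\mu}_i,\boldsymbol{\mu}_{\out})} = \tfrac{1}{2} d_M(\boldsymbol{\mu}_i,\boldsymbol{\mu}_{\out}) = \alpha_i$. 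Assembling the chain of inequalities produces the claimed $\sum_{i} \alpha_i$ upper bound. I anticipate that the main conceptual care lies in verifying the triangle inequality for $d_M$ in the first bound and in correctly pushing the TV triangle inequality through the uniform mixture in the second; the rest reduces to invoking the cited lemmas.
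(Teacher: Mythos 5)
Your proposal is correct and follows essentially the same route as the paper's own proof: the first bound via the identical split of $\mathbb{E}_{\*x \sim P^{\out}_{\mathcal{X}}}(ES_i(\*x))$ over the Mahalanobis ball $B_{\alpha_i}(\boldsymbol{\mu}_{\out})$ and its complement using the triangle inequality for $d_M$, and the second via bounding the expectation gap by $k$ times the total variation, decomposing the uniform mixture, and applying Pinsker's inequality with the closed-form Gaussian KL. No gaps to report.
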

% \begin{thm}
% \label{main_thm}
% We have the following,	
% \begin{align*}
% 	&\mathbb{E}_{x \sim P^{\innn}_{\mathcal{X}}}	(ES(x))-\sum_{i=1}^{k}\Big{(}\big{(}1-P_{\mathcal{X}}^{\out}(B_{\alpha_{i}}(\mu_{\out}))\big{)}+\exp(-\frac{1}{2}\alpha_{i}^{2})P_{\mathcal{X}}^{\out}(B_{\alpha_{i}}(\mu_{\out}))\Big{)}\le \mathbb{E}_{x \sim P^{\innn}_{\mathcal{X}}}	(ES(x))-\mathbb{E}_{x \sim P^{\out}_{\mathcal{X}}}	(ES(x))\le \sum_{i=1}^{k}\alpha_{i},
% \end{align*}
% where , $\alpha_{i}:=\frac{1}{2}d_{M}(\mu_{i},\mu_{\out})$ for $1\le i\le k$.\end{thm}
\begin{proof}
The first bound follows from Corollary \ref{clean-lower} and the second bound follows from Proposition \ref{energy-bound}. 
\end{proof}

\begin{cor}
% \label{main-cor-1}
Let $\boldsymbol{\mu}_{\out}=\boldsymbol{0}\in \R^{d}$ and $\boldsymbol{\mu}_{i}$ with the configuration that we considered in Section \ref{sec:bound} of the main body and let $\Sigma=\mathbf{I}_{d}$. It follows that $\alpha=d_{M}(\boldsymbol{\mu}_{out},\boldsymbol{\mu}_{i})$ for all $1\le i\le k$. We have the following from the first bound in the Theorem \ref{main_thm},
\begin{align*}
	& \mathbb{E}_{\*x \sim P^{\out}_{\mathcal{X}}}	(ES(\*x))\le k\Big{(}\big{(}1-P_{\mathcal{X}}^{\out}(B_{\alpha}(\boldsymbol{\mu}_{\out}))\big{)}+\exp(-\frac{1}{2}\alpha^{2}))\Big{)}.
\end{align*}
Now as $\alpha \to \infty$ the right hand side in the above approaches to $0$. This shows that the performance of GEM becomes better as $\alpha \to \infty$. On the other hand, we have the following from second part of Theorem \ref{main_thm},
\begin{align*}
	 \mathbb{E}_{x \sim P^{\innn}_{\mathcal{X}}}	(ES(\*x))-\mathbb{E}_{\*x \sim P^{\out}_{\mathcal{X}}}	(ES(\*x))\le k\alpha,
\end{align*}
and it follows that as $\alpha \to 0$ the energy difference between in-distribution and out-distribution data converges to $0$. In other words, the performance of GEM  decreases as $\alpha$ approaches to $0$. 
 \end{cor}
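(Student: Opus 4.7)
The plan is to specialize Theorem~\ref{main_thm} to the symmetric configuration described in the hypothesis and then analyze the two asymptotic regimes.

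First, I would verify that under the stated hypotheses the $k$ quantities $\alpha_i$ appearing in Theorem~\ref{main_thm} collapse to a single common value. With $\boldsymbol{\mu}_{\out}=\mathbf{0}$, $\Sigma=\mathbf{I}_d$, and $\boldsymbol{\mu}_i=r\boldsymbol{\nu}_i$ where $\Norm{\boldsymbol{\nu}_i}_2=1$, the Mahalanobis distance reduces to the Euclidean norm and $d_M(\boldsymbol{\mu}_i,\boldsymbol{\mu}_{\out})=r$ is independent of $i$. Calling this common value $\alpha$, the $k$ summands in each bound of Theorem~\ref{main_thm} become identical and the sum $\sum_{i=1}^k$ collapses to multiplication by $k$, producing the two displayed inequalities of the corollary directly.

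Next I would handle the two limiting claims separately. For the first bound, as $\alpha\to\infty$ the exponential $\exp(-\frac{1}{2}\alpha^2)$ vanishes immediately. For the complementary probability, I would invoke continuity of the measure $P^{\out}_{\mathcal{X}}$ from below along the nested family of Mahalanobis balls $\{B_\alpha(\boldsymbol{\mu}_{\out})\}_{\alpha>0}$, whose union exhausts $\mathbb{R}^d$; hence $P^{\out}_{\mathcal{X}}(B_\alpha(\boldsymbol{\mu}_{\out}))\uparrow 1$ and the complement $1-P^{\out}_{\mathcal{X}}(B_\alpha(\boldsymbol{\mu}_{\out}))$ tends to $0$. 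Thus the whole right-hand side of the first bound tends to $0$, giving $\mathbb{E}_{\*x\sim P^{\out}_{\mathcal{X}}}(ES(\*x))\to 0$. For the second bound, the upper bound $k\alpha$ converges trivially to $0$ as $\alpha\to 0$, and so the in-out energy gap also shrinks to zero in this regime.

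In summary, the corollary is essentially a direct specialization of Theorem~\ref{main_thm} followed by an elementary limit analysis. The only step that merits explicit justification is the continuity-of-measure observation used in the $\alpha\to\infty$ analysis; I do not foresee any substantive obstacle, since the heavy lifting was already done in Theorem~\ref{main_thm}.
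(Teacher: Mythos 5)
Your proposal is correct and takes essentially the same route as the paper: the corollary is obtained by specializing Theorem~\ref{main_thm} to the symmetric configuration (all $k$ summands coincide, so each sum becomes a factor of $k$) and then taking the elementary limits $\alpha\to\infty$ and $\alpha\to 0$, with your continuity-of-measure justification for $P^{\out}_{\mathcal{X}}(B_{\alpha}(\boldsymbol{\mu}_{\out}))\to 1$ being the only step the paper leaves implicit. One caveat you inherit from the paper's own wording: Theorem~\ref{main_thm} defines $\alpha_{i}=\tfrac{1}{2}d_{M}(\boldsymbol{\mu}_{i},\boldsymbol{\mu}_{\out})$, so the direct specialization actually yields the bounds with the half-distance $r/2$ rather than with $\alpha=r$ as you (and the corollary) write---the second inequality still follows since $kr/2\le kr$, but the first displayed bound with the full distance is not what the theorem gives; none of this affects the limiting conclusions.
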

 %%%%%%%%%%%%%%%%%%%%%%%%%%%%%%%%
\section{Other Proofs}
\label{app:other-proof}
%%%%%%%%%%%%%%%%%%%%%%%%%%%%%%%%
\paragraph{Performance of our method with respect to the number of classes}
In this section, we work in the setting same as Section \ref{sec:bound} from the main paper. Recall that,  $\boldsymbol{\mu}_{\out}=\boldsymbol{0}\in \R^{d}$ and $\Sigma=\mathbf{I}_{d}$ and $\boldsymbol{\mu}_{i}$ are such that for $i,j \in \{1,...,k\}$ and $i\neq j$,
\begin{align*}
    %&\inn{\mu_{i},\mu_{j}}=\delta_{ij},\\
    &\Norm{\boldsymbol{\mu}_{i}-\boldsymbol{\mu}_{j}}=\sqrt{2}\alpha ,\\
    &\Norm{\boldsymbol{\mu}_{i}-\boldsymbol{\mu}_{\out}}=\alpha.
\end{align*}
We compute $D$, defined by \ref{ast}, in terms of $k$ to see how it is related to the number of classes. First, we need the following definition,
\begin{deff}
Let $\boldsymbol{\mu} ,\boldsymbol{\nu} \in \R^{d}$ with $\Norm{\boldsymbol{\mu}-\boldsymbol{\nu}}=\gamma$. Let $P \sim \mathcal{N}(\boldsymbol{\mu},\mathbf{I}_{d})$. Define,
\begin{align*}
    A_{\gamma}:=\mathbb{E}_{\*x \sim P}	(\exp(-\frac{1}{2}\Norm{\*x-\boldsymbol{\nu}}^{2})).
\end{align*}
\end{deff}
\begin{rmk}
\label{fixe-gaussian-distance}
Notice that $A_{\gamma}$ only depends on the distance between $\boldsymbol{\mu}$ and $\boldsymbol{\nu}$ (i.e. $\gamma$). Also it is easy to see that $A_{\gamma}$ decreases as $\gamma$ increases.
\end{rmk}

Next, we state and prove the following proposition,
\begin{prop}
\label{num-class-app}
We have the following,
\begin{align*}
 D=A_{0}-A_{\alpha}+(k-1)(A_{\sqrt{2}\alpha}-A_{\alpha})
\end{align*}
\end{prop}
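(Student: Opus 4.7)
The proof is essentially a direct computation expanding $D$ via linearity of expectation, using only the combinatorics of the distances and the fact (Remark \ref{fixe-gaussian-distance}) that $A_{\gamma}$ depends on $\boldsymbol{\mu}, \boldsymbol{\nu}$ only through $\gamma = \Norm{\boldsymbol{\mu}-\boldsymbol{\nu}}$. Since $\Sigma = \mathbf{I}_{d}$, each summand $ES_{i}(\*x) = \exp(-\tfrac{1}{2}\Norm{\*x-\boldsymbol{\mu}_{i}}^{2})$ has exactly the form appearing in the definition of $A_{\gamma}$.

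The plan is as follows. First I would handle the OOD term. Writing $\mathbb{E}_{\*x \sim P^{\out}_{\mathcal{X}}}(ES(\*x)) = \sum_{i=1}^{k} \mathbb{E}_{\*x \sim \mathcal{N}(\boldsymbol{0},\mathbf{I}_{d})}(\exp(-\tfrac{1}{2}\Norm{\*x-\boldsymbol{\mu}_{i}}^{2}))$ and using $\Norm{\boldsymbol{\mu}_{\out}-\boldsymbol{\mu}_{i}} = \alpha$ together with rotation invariance, each summand equals $A_{\alpha}$, so the total is $k\,A_{\alpha}$.

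Next I would handle the ID term. Using the mixture density $p^{\innn}_{\mathcal{X}} = \frac{1}{k}\sum_{j=1}^{k} p_{j}$ with $p_{j} \sim \mathcal{N}(\boldsymbol{\mu}_{j},\mathbf{I}_{d})$ and exchanging the finite sums with the expectation, one gets
\begin{align*}
\mathbb{E}_{\*x \sim P^{\innn}_{\mathcal{X}}}(ES(\*x)) \;=\; \frac{1}{k}\sum_{i=1}^{k}\sum_{j=1}^{k} \mathbb{E}_{\*x \sim \mathcal{N}(\boldsymbol{\mu}_{j},\mathbf{I}_{d})}\!\bigl(\exp(-\tfrac{1}{2}\Norm{\*x-\boldsymbol{\mu}_{i}}^{2})\bigr).
\end{align*}
I would then split this double sum into the diagonal and off-diagonal parts. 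For $i=j$ the relevant distance is $0$ (so the summand is $A_{0}$) and there are $k$ such pairs, contributing $A_{0}$ after dividing by $k$. For $i \neq j$ the relevant distance is $\Norm{\boldsymbol{\mu}_{i}-\boldsymbol{\mu}_{j}} = \sqrt{2}\alpha$ (so the summand is $A_{\sqrt{2}\alpha}$) and there are $k(k-1)$ such pairs, contributing $(k-1)A_{\sqrt{2}\alpha}$ after dividing by $k$. Thus $\mathbb{E}_{\*x \sim P^{\innn}_{\mathcal{X}}}(ES(\*x)) = A_{0} + (k-1)\,A_{\sqrt{2}\alpha}$.

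Finally I would subtract the two expressions and regroup:
\begin{align*}
D \;=\; A_{0} + (k-1)A_{\sqrt{2}\alpha} - k\,A_{\alpha} \;=\; A_{0} - A_{\alpha} + (k-1)\bigl(A_{\sqrt{2}\alpha} - A_{\alpha}\bigr),
\end{align*}
which is exactly the claimed identity. There is no real obstacle here; the only subtle step is invoking rotation invariance of the standard Gaussian to justify that only the pairwise distances matter, and correctly counting that there are $k(k-1)$ off-diagonal index pairs (not $\binom{k}{2}$, since $(i,j)$ and $(j,i)$ give distinct terms in the double sum).
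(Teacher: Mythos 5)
Your proposal is correct and matches the paper's own argument essentially verbatim: both expand $D$ by linearity into the double sum over mixture components and score terms, identify each summand as $A_{0}$, $A_{\alpha}$, or $A_{\sqrt{2}\alpha}$ via the pairwise distances, and obtain $A_{0}+(k-1)A_{\sqrt{2}\alpha}-kA_{\alpha}$ before regrouping. No gaps; the diagonal/off-diagonal counting and the appeal to rotation invariance are exactly the steps in the paper's proof.
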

\begin{proof}
First, recall from \ref{ast} that,
\begin{align*}
    D=\mathbb{E}_{\*x \sim P^{\innn}_{\mathcal{X}}}	(ES(\*x))-\mathbb{E}_{\*x \sim P^{\out}_{\mathcal{X}}}	(ES(\*x)).
\end{align*}
Next, for the first term we have,
\begin{equation}
\begin{aligned}
\label{I}
 \mathbb{E}_{\*x \sim P^{\innn}_{\mathcal{X}}}	(ES(\*x))=\sum_{i=1}^{k} \mathbb{E}_{\*x \sim P^{\innn}_{\mathcal{X}}}  (ES_{i}(\*x))=\frac{1}{k}\sum_{i=1}^{k} \sum_{j=1}^{k}\mathbb{E}_{\*x \sim P_{j}}  (ES_{i}(\*x))&=\frac{1}{k}\big{(}kA_{0}+k(k-1)A_{\sqrt{2}\alpha}\big{)}\\
 &=A_{0}+(k-1)A_{\sqrt{2}\alpha}.
\end{aligned}\tag{I}
\end{equation}
For the second term we have,
\begin{align}
\label{II}
     \mathbb{E}_{\*x \sim P^{\out}_{\mathcal{X}}}	(ES(\*x))=\sum_{i=1}^{k} \mathbb{E}_{\*x \sim P^{\out}_{\mathcal{X}}}  (ES_{i}(\*x))=kA_{\alpha}.\tag{II}
\end{align}
The result follows by combining \ref{I} and \ref{II}.
\end{proof}
\begin{cor}
Since $\sqrt{2}\alpha>\alpha$ it follows from Remark \ref{fixe-gaussian-distance} that $A_{\sqrt{2}\alpha} <A_{\alpha}$. This means that the last term term in the following is negative,
\begin{align*}
   D=A_{0}-A_{\alpha}+(k-1)(A_{\sqrt{2}\alpha}-A_{\alpha}), 
\end{align*}
In other words, as $k$ increases $D$ becomes smaller which indicates that the performance decreases. %This is approves our observation in Section \ref{sec:gmm-simulation} from the main paper.
\end{cor}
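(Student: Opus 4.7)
The plan is to verify the two atomic ingredients underlying the corollary: the monotonicity of $A_\gamma$ in $\gamma$ that is invoked from Remark~\ref{fixe-gaussian-distance}, and then the elementary algebraic consequence for $D$ given the decomposition from Proposition~\ref{num-class-app}.

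First, I would pin down the monotonicity of $A_\gamma$ in $\gamma$. By rotational invariance of the standard Gaussian, without loss of generality I can take $\boldsymbol{\mu}=\mathbf{0}$ and $\boldsymbol{\nu}=\gamma e_{1}$, so that
\begin{align*}
A_{\gamma} = (2\pi)^{-d/2}\int_{\R^{d}}\exp\!\Big(-\tfrac{1}{2}\Norm{\*x}_{2}^{2}-\tfrac{1}{2}\Norm{\*x-\gamma e_{1}}_{2}^{2}\Big)\,d\*x.
\end{align*}
Expanding and completing the square in the exponent gives
\begin{align*}
-\tfrac{1}{2}\Norm{\*x}_{2}^{2}-\tfrac{1}{2}\Norm{\*x-\gamma e_{1}}_{2}^{2} = -\Norm{\*x-\tfrac{\gamma}{2}e_{1}}_{2}^{2}-\tfrac{\gamma^{2}}{4}.
\end{align*}
Evaluating the resulting Gaussian integral $\int\exp(-\Norm{\*y}_{2}^{2})\,d\*y=\pi^{d/2}$ yields the closed form
$A_{\gamma}=2^{-d/2}\exp(-\gamma^{2}/4)$,
which is strictly decreasing in $\gamma\ge 0$. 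In particular, since $\sqrt{2}\alpha>\alpha$ for $\alpha>0$, we conclude $A_{\sqrt{2}\alpha}<A_{\alpha}$, so $A_{\sqrt{2}\alpha}-A_{\alpha}<0$.

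Second, I would invoke Proposition~\ref{num-class-app}, which gives the decomposition $D=A_{0}-A_{\alpha}+(k-1)(A_{\sqrt{2}\alpha}-A_{\alpha})$. The first two summands $A_{0}-A_{\alpha}$ do not depend on $k$, so all $k$-dependence sits in the product $(k-1)(A_{\sqrt{2}\alpha}-A_{\alpha})$. Writing $D=D(k)$, the discrete difference is
\begin{align*}
D(k+1)-D(k) = A_{\sqrt{2}\alpha}-A_{\alpha} < 0,
\end{align*}
so $D$ is strictly decreasing in $k$ for $k\ge 1$. This immediately delivers the corollary's conclusion that the separation $D$ shrinks as the number of classes grows, which is the stated degradation of GEM performance.

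The only nontrivial step is the monotonicity of $A_{\gamma}$, which the authors assert as ``easy to see'' in the Remark; the short Gaussian-integral computation above settles it cleanly. Everything else is then just reading off the sign of a single term in the decomposition, so I do not anticipate any genuine obstacle.
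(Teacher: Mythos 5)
Your argument is correct and follows essentially the same route as the paper: read off the sign of the $k$-dependent term in the decomposition $D=A_{0}-A_{\alpha}+(k-1)(A_{\sqrt{2}\alpha}-A_{\alpha})$ from Proposition~\ref{num-class-app} using $A_{\sqrt{2}\alpha}<A_{\alpha}$. The one place you go beyond the paper is in actually proving the monotonicity of $A_{\gamma}$ --- your completion-of-the-square computation giving $A_{\gamma}=2^{-d/2}\exp(-\gamma^{2}/4)$ is correct and rigorously settles what Remark~\ref{fixe-gaussian-distance} merely asserts as ``easy to see.''
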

\section{Experiment Details}
\label{app:experiment-details}
%%%%%%%%%%%%%%%%%%%%%%%%%%%%%%%%%%%%%%%%%
%\subsection{CIFAR10}
%These are results for CIFAR-10.
All experiments ran with PyTorch and NVIDIA GeForce RTX 2080Ti. The pre-trained models are downloaded from the codebase: \url{https://github.com/wetliu/energy_ood}.
\begin{table*}
        % \begin{adjustbox}{width=\columnwidth,center}
               \centering
               %\small
               \scalebox{0.9}{
               \begin{tabular}{ll|ccc}
                        \toprule
          \multirow{4}{0.06\linewidth}{\textbf{Dataset $\mathcal{D}_{\text{out}}^{\text{test}}$}}  &  &\textbf{FPR95}  &  \textbf{AUROC}  & \textbf{AUPR}   \\
                 & &  & $\textbf{}$  & \\
                        & & $\downarrow$ & $\uparrow$ & $\uparrow$ \\
                        \hline
                        \multirow{5}{0.09\linewidth}{{\textbf{Texture}}}
                        & Softmax score \citep{hendrycks2016baseline} & 59.28 & 88.50 & 97.16 \\
                        & Energy score ~\citep{liu2020energy} & 52.79 & 85.22 & 95.41 \\
                        & ODIN \citep{liang2018enhancing} & 49.12 & 84.97 & 95.28 \\%(T=10, noise=0.0004/0.0024)
                        & Mahalanobis \citep{lee2018simple} & 15.14 & 97.32 & 99.41 \\
                        & GEM  (ours)  & 15.06 & 97.33 & 99.41\\
                        \hline
                        %------------------------SVHN----------------------
                        \multirow{5}{0.09\linewidth}{{\textbf{SVHN}}}
                        & Softmax score \citep{hendrycks2016baseline} & 48.49 & 91.89 & 98.27 \\
                        & Energy score ~\citep{liu2020energy} & 35.59 & 90.96 & 97.64 \\
                        & ODIN \citep{liang2018enhancing} & 33.55 & 91.96 & 98.00 \\
                        & Mahalanobis \citep{lee2018simple} & 12.86 & 97.59 & 99.47 \\
                        & GEM  (ours)  & 13.42 & 97.59 & 99.47\\
                        \hline
                        % ----------------------- Places365 -------------
                        \multirow{5}{0.09\linewidth}{{\textbf{ Places365}}}
                        & Softmax score \citep{hendrycks2016baseline} & 59.48 & 88.20 & 97.10 \\
                        & Energy score ~\citep{liu2020energy} & 40.14 & 89.89 & 97.30 \\
                        & ODIN \citep{liang2018enhancing} & 57.40 & 84.49 & 95.82 \\
                        & Mahalanobis \citep{lee2018simple} & 68.42 & 84.41 & 96.08  \\

                        & GEM  (ours)  & 68.03 & 84.44 & 96.11\\
                        \hline
                        % ----------------------- LSUN-C -----------------
                        \multirow{5}{0.09\linewidth}{{\textbf{ LSUN-C}}}
                        & Softmax score \citep{hendrycks2016baseline} & 30.80 & 95.65 & 99.13 \\
                        & Energy score ~\citep{liu2020energy} & 8.26 & 98.35 & 99.66 \\
                        & ODIN \citep{liang2018enhancing} & 15.52 & 97.04 & 99.33 \\
                        &  Mahalanobis \citep{lee2018simple} & 39.47 & 94.09 & 98.80 \\
                        & GEM  (ours)  & 39.46 & 94.13 & 98.81\\
                        \hline
                        % --------------------LSUN-R --------------------
                        \multirow{5}{0.09\linewidth}{{\textbf{ LSUN Resize}}}
                        & Softmax score \citep{hendrycks2016baseline} & 52.15 & 91.37 & 98.12 \\
                        & Energy score ~\citep{liu2020energy} & 27.58 & 94.24 & 98.67 \\
                        & ODIN \citep{liang2018enhancing} & 26.62 & 94.57 & 98.77 \\
                        & Mahalanobis \citep{lee2018simple} & 42.07 & 93.29 & 98.61 \\
                        & GEM  (ours)  & 42.89 & 93.27 & 98.61\\
                        \hline
                        % -------------iSUN --------------------
                        \multirow{5}{0.09\linewidth}{{\textbf{iSUN}}}
                        & Softmax score \citep{hendrycks2016baseline} & 56.03 & 89.83 & 97.74 \\
                        & Energy score ~\citep{liu2020energy} & 33.68 & 92.62 & 98.27 \\
                        & ODIN \citep{liang2018enhancing} & 32.05 & 93.50 & 98.54 \\
                        & Mahalanobis \citep{lee2018simple} & 43.80 & 92.75 & 98.46 \\
                        & GEM  (ours)  & 44.41 & 92.60 & 98.42\\
                        \bottomrule
                \end{tabular}}
        % \end{adjustbox}
        \vspace{-0.2cm}
        \caption[]{\small OOD Detection performance of CIFAR-10 as in-distribution for each OOD test dataset. The reported results for benchmarks other than GEM are courtesy of \citep{liu2020energy}
        }
        \label{tab:cifar10-results}
\end{table*}

%\subsection{CIFAR100}
\begin{table*}
      %\begin{adjustbox}{width=\columnwidth,center}
               \centering
               \scalebox{0.9}{
               \begin{tabular}{ll|ccc}
                        \toprule  \multirow{4}{0.06\linewidth}{\textbf{Dataset $\mathcal{D}_{\text{out}}^{\text{test}}$}} & &\textbf{FPR95}  &  \textbf{AUROC}  & \textbf{AUPR}   \\
                 & &  & $\textbf{}$  & \\
                        & & $\downarrow$ & $\uparrow$ & $\uparrow$ \\
                        \hline
                        \multirow{5}{0.09\linewidth}{{\textbf{ Texture}}}
                        & Softmax score \citep{hendrycks2016baseline} & 83.29 & 73.34 & 92.89  \\
                        & Energy score ~\citep{liu2020energy} & 79.41 & 76.28 & 93.63  \\
                        & ODIN \citep{liang2018enhancing} & 79.27 & 73.45 & 92.75\\
                        & Mahalanobis \citep{lee2018simple} & 39.66 & 90.79 & 97.82 \\
                        & GEM (ours)  & 39.77 & 90.93 & 97.86\\
                        \hline
                        % ----------- SVHN -----------------
                        \multirow{5}{0.09\linewidth}{{\textbf{ SVHN}}}
                        & Softmax score \citep{hendrycks2016baseline} & 84.59 & 71.44 & 92.93 \\
                        & Energy score ~\citep{liu2020energy} & 85.82 & 73.99 & 93.65 \\
                        & ODIN \citep{liang2018enhancing} & 84.66 & 67.26 & 91.38 \\
                        & Mahalanobis \citep{lee2018simple} & 43.93 & 90.52 & 97.84 \\
                        & GEM  (ours)  & 43.87 & 90.49 & 97.83\\
                        \hline
                        % ----------- Places -----------------
                        \multirow{5}{0.09\linewidth}{{\textbf{ Places365}}}
                        & Softmax score \citep{hendrycks2016baseline} & 82.84 & 73.78 & 93.29\\
                        & Energy score ~\citep{liu2020energy} & 80.56 & 75.44 & 93.45 \\
                        & ODIN \citep{liang2018enhancing} & 87.88 & 71.63 & 92.56 \\
                        &  Mahalanobis \citep{lee2018simple} & 90.12 & 68.44 & 91.21 \\
                        & GEM  (ours)  & 90.43 & 68.15 & 91.08\\
                        \hline
                        % ----------- LSUN-C -----------------
                        \multirow{5}{0.09\linewidth}{{\textbf{ LSUN-C}}}
                        & Softmax score \citep{hendrycks2016baseline} & 66.54 & 83.79 & 96.35 \\
                        & Energy score ~\citep{liu2020energy} & 35.32 & 93.53 & 98.62 \\
                        & ODIN \citep{liang2018enhancing} & 55.55 & 87.73 & 97.22 \\
                        & Mahalanobis \citep{lee2018simple} & 98.47 & 60.23 & 90.32 \\
                        & GEM  (ours)  & 98.31 & 60.03 & 90.22\\
                        \hline
                        % ----------- LSUN-R -----------------
                        \multirow{5}{0.09\linewidth}{{\textbf{ LSUN Resize}}}
                        & Softmax score \citep{hendrycks2016baseline} & 82.42 & 75.38 & 94.06\\
                        & Energy score ~\citep{liu2020energy} & 79.47 & 79.23 & 94.96 \\
                        & ODIN \citep{liang2018enhancing}& 71.96 & 81.82 & 95.65\\
                        &  Mahalanobis \citep{lee2018simple} & 33.35 & 93.74 & 98.64 \\
                        & GEM  (ours)  & 33.13 & 93.82 & 98.66\\
                        \hline
                        % ----------- iSUN -----------------
                        \multirow{5}{0.09\linewidth}{{\textbf{iSUN}}}
                        & Softmax score \citep{hendrycks2016baseline} & 82.80 & 75.46 & 94.06 \\
                        & Energy score ~\citep{liu2020energy} & 81.04 & 78.91 & 94.91 \\
                        & ODIN \citep{liang2018enhancing} & 68.51 & 82.69 & 95.80 \\
                        &  Mahalanobis \citep{lee2018simple} & 36.52 & 92.48 & 98.28 \\
                        & GEM  (ours)  & 36.68 & 92.52 & 98.30\\
                        \bottomrule
                \end{tabular}}
         %\end{adjustbox}
        \vspace{-0.2cm}
        \caption[]{\small OOD Detection performance of CIFAR-100 as in-distribution for each OOD test dataset. The reported results for benchmarks other than GEM are courtesy of \citep{liu2020energy}} %For ODIN, we set $T=1000$ and $\eta=0.0028$ for CIFAR-100\@.
        %\textbf{Bold} numbers are superior results.
        
        \label{tab:cifar100-results}
\end{table*}
\newpage
\text{  }
%\newpage
%\text{ }

\end{document}